\documentclass[sigconf]{acmart}
\makeatletter
\@ifundefined{Bbbk}{}{}
\makeatother

\usepackage[utf8]{inputenc} 
\usepackage[T1]{fontenc}    
\usepackage{hyperref}       
\usepackage{url}            
\usepackage{booktabs}       
\usepackage{amsfonts}       
\usepackage{amsthm}         

\usepackage{nicefrac}       
\usepackage{microtype}      
\usepackage{xcolor}         
\usepackage{threeparttablex}

\usepackage{xspace}  
\usepackage{amsmath} 
\usepackage{multirow}
\usepackage{array}
\usepackage[ruled,vlined,linesnumbered]{algorithm2e}
\usepackage{adjustbox}
\usepackage{graphicx} 
\usepackage{amssymb}
\usepackage{enumitem}
\usepackage{wrapfig}
\usepackage{mathtools} 
\usepackage[symbol]{footmisc}
\usepackage{titletoc}
\usepackage{balance}

\newcommand{\V}{\mathcal{V}}
\newcommand{\X}{\mathcal{X}}
\newcommand{\Y}{\mathcal{Y}}

\newcommand{\E}{\mathcal{E}}

\newcommand{\R}{\mathbb{R}}

\newcommand{\G}{\mathcal{G}}
\newcommand{\T}{\mathcal{T}}
\newcommand{\W}{\mathbf{W}}
\newcommand{\LLM}{ \mathcal{L}_{\tau}}
\newcommand{\method}{\textsc{SaVe-TAG}}

\theoremstyle{plain}
\newtheorem{theorem}{Theorem}[section]
\newtheorem{definition}[theorem]{Definition}

\definecolor{hlcolor}{RGB}{255,240,160}

\title{
\method: 
LLM-based Interpolation\\ for Long-Tailed Text-Attributed Graphs
}

\author{Leyao Wang}
\affiliation{%
  \institution{Yale University}
  \city{New Haven}
  \state{CT}
  \country{USA}
}
\email{leyao.wang.lw855@yale.edu}

\author{Yu Wang}
\affiliation{%
  \institution{University of Oregon}
  \city{Eugene}
  \state{OR}
  \country{USA}
}
\email{yuwang@uoregon.edu}

\author{Bo Ni}
\affiliation{%
  \institution{Vanderbilt University}
  \city{Nashville}
  \state{TN}
  \country{USA}
}
\email{bo.ni@vanderbilt.edu}

\author{Yuying Zhao}
\affiliation{%
  \institution{Vanderbilt University}
  \city{Nashville}
  \state{TN}
  \country{USA}
}
\email{yuying.zhao@vanderbilt.edu}

\author{Hanyu Wang}
\affiliation{%
  \institution{Renmin University of China}
  \city{Beijing}
  \country{China}
}
\email{hy.wang@ruc.edu.cn}

\author{Yao Ma}
\affiliation{%
  \institution{Rensselaer Polytechnic Institute}
  \city{Troy}
  \state{NY}
  \country{USA}
}
\email{may13@rpi.edu}

\author{Tyler Derr}
\affiliation{%
  \institution{Vanderbilt University}
  \city{Nashville}
  \state{TN}
  \country{USA}
}
\email{tyler.derr@vanderbilt.edu}


\AtBeginDocument{%
  }


\copyrightyear{2026}
\acmYear{2026}
\setcopyright{cc}
\setcctype{by}
\acmConference[KDD '26]{Proceedings of the 32nd ACM SIGKDD Conference on Knowledge Discovery and Data Mining V.1}{August 09--13, 2026}{Jeju Island, Republic of Korea}
\acmBooktitle{Proceedings of the 32nd ACM SIGKDD Conference on Knowledge Discovery and Data Mining V.1 (KDD '26), August 09--13, 2026, Jeju Island, Republic of Korea}
\acmPrice{}
\acmDOI{10.1145/3770854.3780311}
\acmISBN{979-8-4007-2258-5/2026/08}




\begin{document}

\begin{CCSXML}
<ccs2012>
   <concept>
       <concept_id>10010147.10010257</concept_id>
       <concept_desc>Computing methodologies~Machine learning</concept_desc>
       <concept_significance>500</concept_significance>
       </concept>
 </ccs2012>
 
\end{CCSXML}
\ccsdesc[500]{Computing methodologies~Machine learning\vspace{-5pt}}

\keywords{LLM Data Augmentation, Text-Attributed Graphs, 
Class Imbalance 
\vskip -0.7ex
}















\begin{abstract}
Real-world graph data often follows long-tailed distributions, making it difficult for Graph Neural Networks (GNNs) to generalize well across both head and tail classes. Recent advances in Vicinal Risk Minimization (VRM) have shown promise in mitigating class imbalance with numeric interpolation; however, existing approaches largely rely on embedding-space arithmetic, which fails to capture the rich semantics inherent in text-attributed graphs.
In this work, we propose our method \textbf{\method} (\textbf{S}emantic-\textbf{a}ware \textbf{V}icinal Risk Minimization for Long-Tailed \textbf{T}ext-\textbf{A}ttributed \textbf{G}raphs), a novel VRM framework that leverages Large Language Models (LLMs) to perform text-level interpolation, generating on-manifold, boundary-enriching synthetic samples for minority classes. To mitigate the risk of noisy generation, we introduce a confidence-based edge assignment mechanism that uses graph topology as a natural filter to ensure structural consistency.
We provide theoretical justification for our method and conduct extensive experiments on benchmark datasets, showing that our approach consistently outperforms both numeric interpolation and prior long-tailed node classification baselines. Our results highlight the importance of integrating semantic and structural signals for balanced and effective learning on text-attributed graphs.  
The source code is publicly available at: \url{https://github.com/LWang-Laura/SaVe-TAG}.
 \vskip -1.3ex
\begin{figure}[h!]
    \centering
    \includegraphics[width=\linewidth]{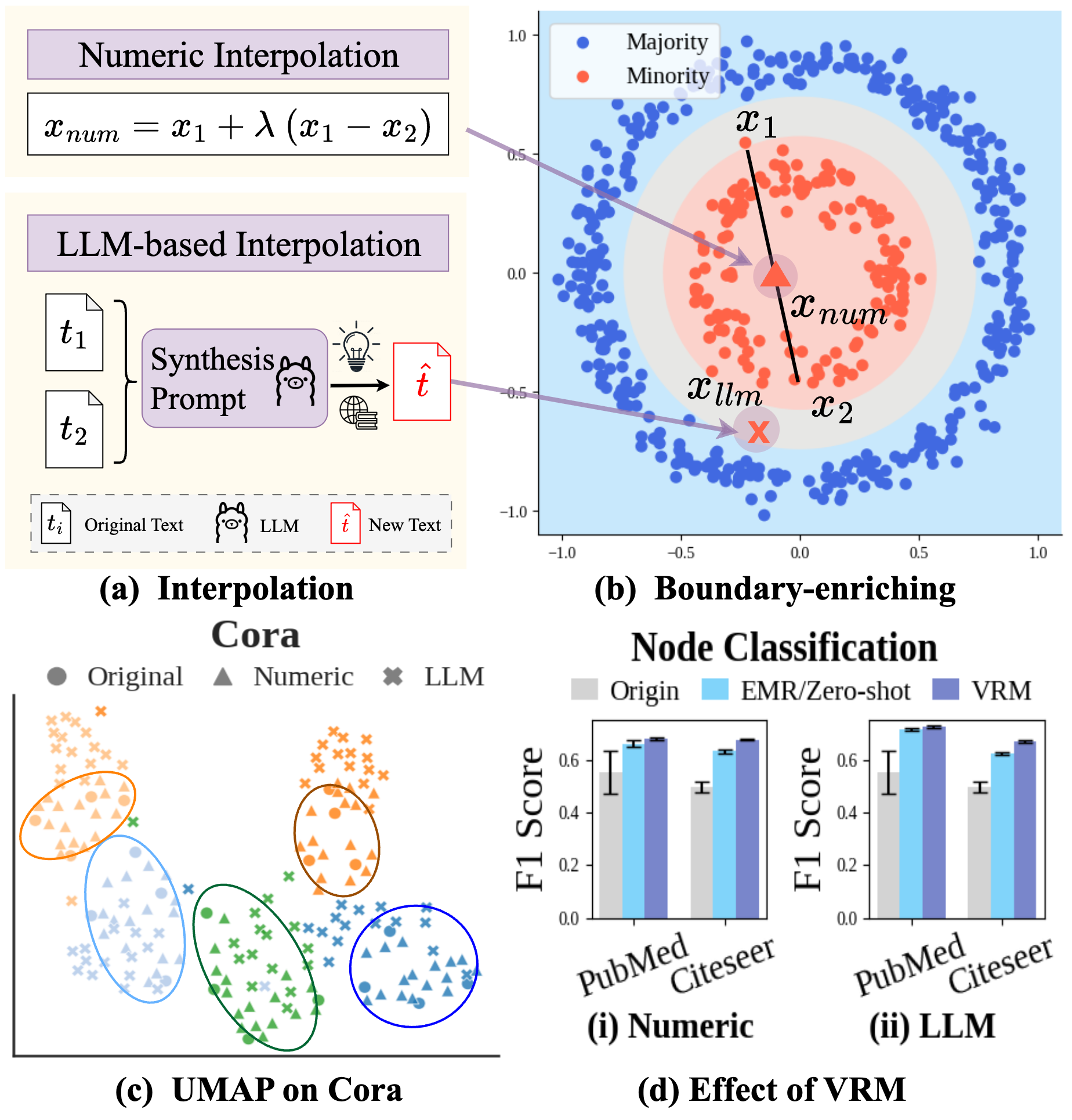}
    \vskip -2ex
  \caption{
Motivation. (a) Numeric vs. LLM-based interpolation.~ (b) LLM-based interpolation enriches decision boundaries (on synthetic data).~ (c) UMAP of original and interpolated samples on Cora; numeric interpolations remain mostly within class boundaries, while LLM-based samples extend beyond.~ (d) Vicinal Risk Minimization (VRM) outperforms Empirical Risk Minimization (ERM) in data augmentation.
}
\vskip -3ex
    \label{fig:fig1}
\end{figure}
 \end{abstract}

\settopmatter{authorsperrow=4}
\maketitle


\section{Introduction}\label{sec-intro}

Graph Neural Networks (GNNs) have proven effective for node classification by modeling structural dependencies~\cite{kipf2016semi,hamilton2017inductive,rong2020deep}. 
However, real-world graphs frequently exhibit long-tail distributions \cite{ma2025class, wang2021distance, 10.1145/3637528.3671880}, making it difficult to learn 
underrepresented classes.
Recent approaches to data imbalance often adopt the Vicinal Risk Minimization (VRM) principle \cite{chapelle2000vicinal}, which minimizes loss over a synthetic neighborhood of the training data—extending 
Empirical Risk Minimization (ERM) that focuses solely on the existing data~\cite{vapnik1998statistical}. 
Typically,
VRM is instantiated via interpolation 
like Mixup \cite{zhang2017mixup} and SMOTE \cite{chawla2002smote}, and has been adapted for long-tailed graph learning~\cite{zhao2021graphsmote,wang2021mixup}. 
However, these methods often apply mathematic operations over input embeddings or hidden features, failing to preserve the rich manifold structure of text-attributed graphs (TAGs).

Advances in large language models (LLMs) present new opportunities for data augmentation by 
offering external knowledge and creative generation. However, existing work on 
LLM-based augmentation typically rely on zero-shot or few-shot prompting~\cite{li-etal-2023-synthetic,yu2024leveraginglargelanguagemodels}, which usually fail to generate samples within the neighborhood of minority instances, as required by VRM principle. Nevertheless, VRM has been shown critical in data augmentation, especially in improving long-tailed node classification. As seen in Figure~\ref{fig:fig1}d (i), VRM-guided numeric interpolation with nearest neighbors outperform ERM-style embedding duplication in node classification.

Motivated by this insight, we hypothesize that prompting strategies explicitly designed to mimic numeric interpolation will surpass standard zero/few-shot approaches, validated by the results in Figure~\ref{fig:fig1}d (ii). Furthermore, integrating controlled prompting into LLM generation can alleviate common pitfalls of numeric interpolation, enabling manifold-aware, boundary-enriching augmentation that effectively exploits the knowledge encoded in LLMs (see Figure \ref{fig:fig1}).

While LLM-based interpolation shows great potential, it also carries the risk of generating noisy samples. To mitigate this, we leverage the inherent structure of the graph as a natural filter. Using a confidence-based edge assignment strategy, noisy samples tend to remain isolated, limiting their influence on downstream tasks. In contrast, synthesized nodes aligned with the original distribution will be connected with the original graph, allowing GNN layers to reinforce their alignment through neighborhood aggregation.

In summary, we introduce a novel interpolation framework, \method, which unifies textual semantics and graph topology in data augmentation. Our method employs LLM-based interpolation to expand the vicinal boundary and reduce vicinal risk, followed by a topology-aware filtering mechanism that preserves alignment within the underlying data distribution.
Our key contributions are: 

\begin{itemize}[leftmargin=1.5em,topsep=1.5pt]
    \item To the best of our knowledge, we present the first semantic-aware VRM framework that employs LLM-based interpolation for node classification on long-tailed text-attributed graphs.
     \item We provide theoretical insights showing that LLM interpolation can generate manifold-preserving, boundary-enriching samples that effectively minimize vicinal risk, while a topology-aware filter helps mitigate noisy generation in the synthesized data.
    \item We conduct experiments showing our method outperforms embedding-based augmentations and long-tail graph baselines, driven by topology-aware filters and LLM semantic benefits. 
\end{itemize}

\footnotetext{The source code: \url{https://github.com/LWang-Laura/SaVe-TAG}.}
\vspace{-7pt}

 \begin{figure*}[t!]
    \centering
    \includegraphics[width=0.98\textwidth]{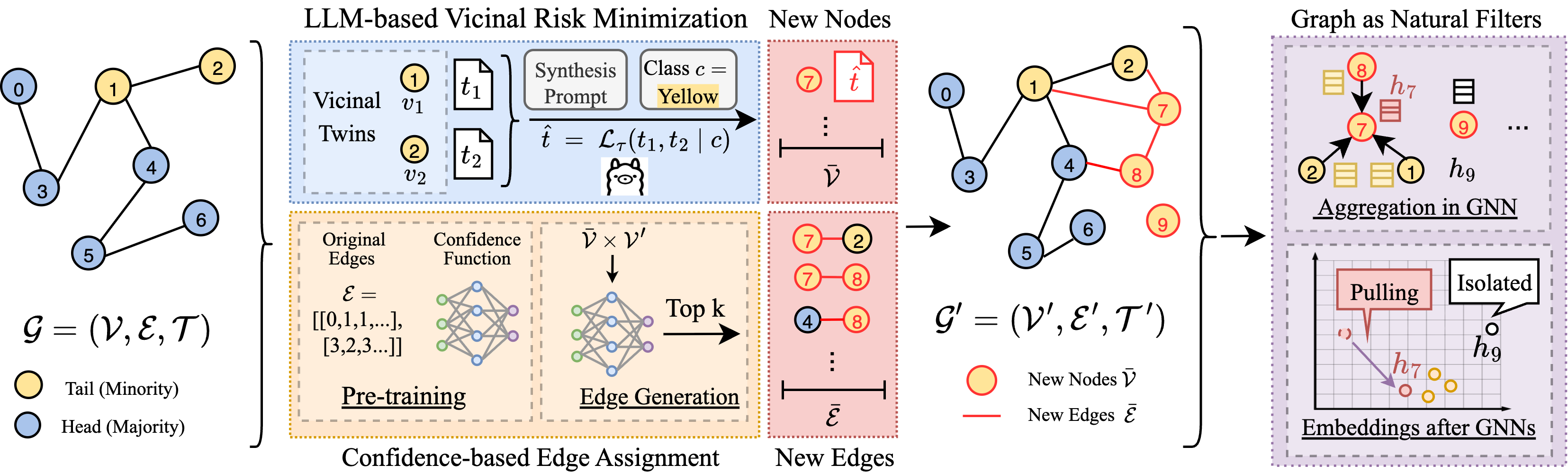}
    \caption{\textbf{An overview of \method.}
Given an input graph \(\mathcal{G} = (\mathcal{V}, \mathcal{E}, \mathcal{T})\), we perform \textit{LLM-based interpolation} on identified \textit{vicinal twins} to synthesize boundary-enriching samples \(\hat{t}\) that minimize vicinal risk.  
A \textit{confidence function} is then pre-trained on the original graph and later used to assign edges to the synthetic nodes via a top-$k$ selection strategy. When the resulting augmented graph \(\mathcal{G}' = (\mathcal{V}', \mathcal{E}', \mathcal{T}')\) is processed by a GNN, such edge assignment incorporates well-aligned nodes into their vicinity while isolating noisy samples.}
    \label{fig-framework}
\end{figure*}


\section{Problem Formulation}
Let $\mathcal{G}=(\V,\E,\T)$ be an attributed graph with node set \(\V\),
edge set \(\E\subseteq \V\times\V\), and per‑node texts \(\T=\{t_v\}_{v\in\V}\). Let $\X$ be the feature matrix, where each row corresponds to a tokenized text $t_v \in \T$. Each node $v$ is associated with a label $y_v \in \{1, \dots, C\}$, where $C$ is the total number of classes, forming the label set $\Y = \{\;y_v \mid v \in \V\;\}$.
Class imbalance induces a skewed empirical distribution on $(\X,\Y)$.
Our goal is to learn a node-level classifier
$f_\theta$  (e.g., a GNN), that
maximizes classification performance.


\section{\method}
\label{sec:framework}
In this section, we present the methodology of \method. We begin with an overview of the main pipeline ($\S$~\ref{ssec:overview}), followed by detailed descriptions of its two core components: \textit{LLM-based Vicinal Risk Minimization} ($\S$~\ref{sec:llm-vrm}) and \textit{Graph as Natural Filters} ($\S$~\ref{sec:graph-natural-filter}).

\subsection{Overview}
\label{ssec:overview}
The overall pipeline of \method{} is illustrated in Figure~\ref{fig-framework}, with the corresponding pseudocode provided in Algorithm 1 (Appendix \ref{appen_sec:details}). Specifically, it comprises the following steps:
\begin{enumerate}[leftmargin=2em,label=\textbf{(\arabic*)}]
    \item \textbf{Vicinal Twin Identification:} identifying same-label \emph{vicinal twins} in tail classes. For each tail-class node $v$, find $k$-NN same-label neighbors, denoted as $\mathcal{N}_k^{(\mathrm{tail})}(v)$.
    \item \textbf{LLM Synthesis:} Synthesize class-consistent texts from vicinal twin pairs using a class-conditioned LLM. Specifically, given node texts $(t_{v_i}, t_{v_j})$ from a vicinal twin pair $(v_i, v_j)$, we query the LLM at temperature $\tau$ (denoted as $\LLM$) to generate a synthetic text $\hat{t}$. The resulting synthetic node $\hat{v}$ is assigned the same label as the original twin, i.e., $y_{\hat{v}} = y_{v_i}$.
    \item \textbf{Confidence-aware Edge Assignment:} Assign confidence-aware edges that \emph{pull} well-aligned samples toward the correct manifold while \emph{isolating} noisy ones. All texts are encoded using a frozen encoder $\phi$. For each newly generated text-attributed node $\hat{v}$, we connect it to high-confidence neighbors identified via a learned scoring function $\kappa(\cdot)$ for confidence.
    \item \textbf{GNN Training:} Train a message-passing GNN on the augmented graph $\mathcal{G}' = (\V', \E', \T')$, where $\V'$ is the augmented node set, $\E'$ is the augmented edge set, and $\T' = \{\; t_v \mid v \in \V' \;\}$ contains the corresponding node texts of $\V'$.
\end{enumerate}
 Steps (1) and (2) are incorporated into \textit{LLM-based Vicinal Risk Minimization} ($\S$~\ref{sec:llm-vrm}), while steps (3) and (4) are unified under \textit{Graph as Natural Filters} ($\S$~\ref{sec:graph-natural-filter}), detailed below.

\subsection{LLM‑based Vicinal Risk Minimization}
\label{sec:llm-vrm}
Here, we formalize how class-conditioned generation preserves semantic manifolds, densifies boundary regions, and improves generalization through Vicinal Risk Minimization (VRM), and present theoretical analyses of these properties. Complete proofs of all theorems are provided in Appendix~\ref{app_sec:add_theory}.

\paragraph{Vicinal Twins for Tail Classes.}
Let $\mathcal{C}_t \subseteq \{1, \ldots, C\}$ denote the set of tail classes. For any text-attributed node $v_1$ with $y_{v_1} \in \mathcal{C}_t$, we define its vicinal twins as
\begin{equation}
\mathcal{N}_{k}^{(\mathrm{tail})}(v_1)
:=
\bigl\{\, v_2 \in \V \;\bigm|\; y_{v_2} = y_{v_1},\, v_2 \in \mathcal{N}_k(v_1) \bigr\},
\end{equation}
that is, same-label neighbors within the $k$-NN vicinity of $v_1$. Synthesizing new nodes from such pairs aligns with the VRM principle, as $(v_1, v_2)$ are close in both label and representation. We refer to such pairs as \emph{vicinal twins}.

\paragraph{LLM‑based interpolation.}

For any pair of vicinal twins $(v_i, v_j)$ and their corresponding text attributes $t_{v_i}$ and $t_{v_j}$, we generate a synthetic text $\hat{t}_{ij}$ by querying the LLM ( denoted as $\mathcal L_\tau$):  $\hat{t}_{ij} = \LLM(t_{v_i}, t_{v_j} \mid y_{v_i})$. This yields a new text-attributed node $\hat{v}_{ij}$ with label $y_{\hat{v}_{ij}} = y_{v_i}$. We collect the set of synthetic nodes as $\bar{\V} := {\hat{v}_{ij}}$ and their corresponding texts as $\bar{\T} := {\hat{t}_{ij}}$.

\subsubsection{Staying on the Manifold}
\label{sec:theory-manifold}
\leavevmode
\vspace{4pt}

Let $\phi:\T\to\X$ denote a frozen text encoder, and write
$x_v:=\phi(t_v)$ for the embedding of node $v$ with text $t_v\in\T$.
For each class $c\in\{1,\dots,C\}$, define the class manifold in
feature space by
\vspace{-6pt}
\[
  \mathcal{M}_c := \{\, x_v \mid y_v = c \,\}, \qquad
  \mathcal{M} := \bigcup_{c=1}^C \mathcal{M}_c .
\]
\vspace{-8pt}

Let $(v_1,v_2)$ be a same‑label pair (\emph{vicinal twins}) with $y_{v_1}=y_{v_2}=c$
and corresponding texts $(t_1,t_2)$.

\begin{theorem}[Off-Manifold Numeric Interpolation]
If $\mathcal{M}_c$ is non-convex, there exist $t_1, t_2$ with $\phi(t_1), \phi(t_2) \in \mathcal{M}_c$ and $\lambda \in (0,1)$ such that
\(
  x_\lambda := \lambda\,\phi(t_1) + (1{-}\lambda)\,\phi(t_2) \notin \mathcal{M}_c
\)
\cite{zhang2018mixupempiricalriskminimization,guo2019mixup,baena2022preventing}.
\end{theorem}

\emph{\textbf{Interpretation.}}
Purely numeric mixup can cross gaps between modes in a curved or multi‑modal
class manifold—common in text semantics—yielding ambiguous training targets.

\begin{theorem}[Manifold-Preserving Class-Consistent Generation]
\label{thm:manifold-preservation}
Let a synthetic text $\hat t \sim \LLM(t_1,t_2 \mid c)$ be a sample from an LLM conditioned on
class $c$. 

Suppose the class‑conditional distribution $\;p\;(\;t\mid y{=}c\;)$ satisfies
$\;\Pr_{t\sim p(\cdot\mid y=c)}\;[\phi(t)\in\mathcal{M}_c] \ge 1-\delta$ (where $\Pr$ denotes probability) for some small error
$\delta\in[0,1)$, and that $\LLM(\cdot\mid c)$ approximates
$p(\cdot\mid y{=}c)$ with total‑variation error at most $\epsilon$.
Then the generated text $\hat t$ will be mapped to the class-$c$ manifold with probability at least $1-(\delta+\epsilon)$, denoted as
$\Pr\big[\phi(\hat t)\in\mathcal{M}_c\big] \;\ge\; 1-(\delta+\epsilon).$
\end{theorem}

\emph{\textbf{Interpretation.}}
Conditioning the generator on class labels ensures that most synthetic texts remain on the corresponding class manifold, motivating \emph{class‑conditioned} prompting in \method{}.

\subsubsection{Boundary‑Enriching Effect of LLM Samples}
\label{sec:boundary}
\leavevmode
\vspace{4pt}

Let $\mathcal{X}\subseteq\mathbb{R}^d$ be the input space and $\mathcal{Y}=\{1,\dots,C\}$ the label set.
A classifier $f$ outputs logits $f^{(c)}(x)$ for class $c\in\mathcal{Y}$ and the logit margin is denoted as $\gamma$.
We augment $\X$ with $m$ LLM‑generated pairs $\{(\hat x_j,\hat y_j)\}_{j=1}^{m}$ to form
${\X}':=\X\cup\{(\hat x_j,\hat y_j)\}_{j=1}^{m}$.
\vspace{1ex}

\begin{definition}[The Minimum Margin]
For a labeled example $(x,y)$, define its (logit) margin $\gamma(x,y)$ to be the difference between the model's score for the true class $y$ and the highest score assigned to any other class, denoted as:
$\gamma(x,y)\;:=\; f^{(y)}(x)\;-\;\max_{c\neq y} f^{(c)}(x),$

\noindent
and for any labeled set $S$, define the minimum margin
$\gamma_{\min}(S)\;:=\min_{(x,y)\in S}\gamma(x,y)$.
\end{definition}

\begin{definition}[Boundary‑Coverage Rate (BCR)]
\label{def:bcr}
A labeled sample $(x,y)$ is a boundary sample if the majority of its $k$ Nearest Neighbors have labels different from $y$ \cite{10.1007/11538059_91}. Boundary‑Coverage Rate (BCR) is the fraction of boundary samples in
$\X'$ .
\end{definition}

\begin{theorem}[Margin Lower Bound]
\label{thm:margin}
Let $\eta > 0$ be a fixed margin slack parameter, ~$\gamma_{\min}(\X)$ denote the minimum margin achieved by a model trained on the original dataset $\X$, and $\gamma_{\min}(\X')$ denote the minimum margin after retraining on the augmented dataset $\X'$. \

Then, adapting the argument from \cite{10.5555/3618408.3619509}, the minimum margin after augmentation and retraining decreases by at most $\eta$ times the fraction of non-boundary samples, that is:
\vspace{-0.5pt}
\[
\gamma_{\min}\bigl(\X'\bigr) \;\ge\; \gamma_{\min}(\X)\;-\;\eta\,\bigl(1-\mathrm{BCR}\bigr).
\]
\end{theorem}

\textbf{\emph{Interpretation.}}
Larger $\mathrm{BCR}$ means more near‑boundary 
samples; the bound shows the margin drop is capped by $\delta(1-\mathrm{BCR})$, so higher $\mathrm{BCR}$ improves robustness near decision boundaries \cite{10.1007/11538059_91}.

\subsubsection{Vicinal Risk Minimization with LLM-based Neighborhood}
\label{sec:vrm}
\leavevmode

Following 
VRM~\cite{chapelle2000vicinal}, we define, for each training text $t_i$ with class label $c_i\!\in\!\{1,\dots,C\}$, an LLM‑based vicinal distribution
\[
\mathcal V_{\LLM}(t_i)
\;=\;
\sum_{j=1}^{n} w_{ij}\,
\Pr\;[\;\hat t = \LLM(t_i,t_j\mid c_i)\;],
\quad
w_{ij}\!\ge\!0,\;\; \sum_{j} w_{ij}=1,
\]
The corresponding vicinal risk is
\[
R_{\mathrm{vrm}}(f_\theta)
\;=\;
\frac{1}{n}\sum_{i=1}^{n}
\mathbb{E}_{\hat t\sim \mathcal V_{\LLM}(t_i)}
\!\left[\,
\mathcal J \; \!\big(f_\theta(\phi(\hat t)),\,c_i\big)
\right],
\]
where $\mathbb{E}$ is the expectation operator, $\phi$ is a (frozen) text encoder, $f_\theta$ is the classifier, and $\mathcal J$ is the per‑example loss (e.g., cross‑entropy).

\begin{theorem}[On‑manifold vicinal risk]
\label{thm:manifold-vrm}
If Theorem~\ref{thm:manifold-preservation} holds, then $R_{\mathrm{vrm}}(f_\theta)$ is evaluated on‑manifold with probability at least $1-\delta$ (over the randomness of LLM sampling).
\end{theorem}

\emph{\textbf{Interpretation.}}
Thm.~\ref{thm:manifold-vrm} ensures VRM mostly trains on valid (on‑manifold) synthetic data. 

\begin{theorem}[Boundary coverage $\Rightarrow$ lower vicinal risk]
\label{thm:margin-vrm}
Let $f_\theta^{\mathrm{aug}}$ denote the model trained by VRM on the augmented dataset $\X'$, and let $f_\theta^{\mathrm{orig}}$ denote the model trained without augmentation on $\X$. Assume the loss function $\mathcal J$ is $L$-Lipschitz in its first argument. Then, for any tolerance parameter $\zeta > 0$,
\[
R_{\mathrm{vrm}}(f_\theta^{\mathrm{aug}})
\;\le\;
R_{\mathrm{vrm}}(f_\theta^{\mathrm{orig}})
\;-\; L\,\gamma_0\,(\mathrm{BCR}-\zeta)
\;+\;\mathcal O(\delta),
\]
where $L$ is the Lipschitz constant of the loss (i.e., its sensitivity to input changes), $\gamma_0 := \gamma_{\min}(\X)$ is the minimum margin before augmentation, and $\mathrm{BCR} \in [0,1]$ denotes the boundary coverage ratio of the synthesized vicinal mass.
\end{theorem}

\emph{\textbf{Interpretation.}}
 A higher BCR (i.e., more boundary samples) leads to a lower vicinal risk. If a sufficient fraction of the vicinal mass covers the decision boundary (i.e., large $\mathrm{BCR}$), the loss is Lipschitz-smooth, and off-manifold error $\delta$ is small, then the vicinal risk after augmentation is guaranteed to decrease— by at least $L\,\gamma_0\,(\mathrm{BCR}-\zeta)$, up to a small error $\mathcal{O}(\delta)$.

\subsection{Graph as a Natural Filter}
\label{sec:graph-natural-filter}
Here, we outline how the graph structure helps to denoise LLM-based VRM. We introduce a confidence function to assign edges between synthetic and original nodes, ensuring reliable anchoring while isolating noisy samples. Theoretical analysis shows that high-confidence edges guide synthetic nodes toward the correct class manifold, while low-confidence samples remain isolated. Complete proofs of all theorems are provided in Appendix~\ref{app_sec:add_theory}.

\paragraph{Notation}
Let $\bar{\V}$ denote the set of synthetic nodes, and let $\bar{\T} = \{\hat t_{\hat v} : \hat v \in \bar{\V}\}$ be their corresponding texts, where each $\hat v$ is a newly generated text-attributed node produced by \textit{LLM-based Vicinal Risk Minimization} (see $\S$~\ref{sec:llm-vrm}). We employ a frozen encoder $\phi : \T \cup \bar{\T} \to \X'$ to obtain representations for all texts.

\subsubsection{Confidence function.}
\leavevmode
\vspace{4pt}

Using the frozen encoder $\phi$, we map each node $v$ (real or synthetic) to a predictor-space embedding via an MLP encoder $\mathrm{enc}$:
\[
z_v := \mathrm{enc}\big(\phi(t_v)\big) \in \R^{p}.
\]
For any candidate edge $(u, v)$, a second MLP (the predictor, $\mathrm{pred}$) maps their inner-product similarity to a logit:
\[
s_{u,v} := \mathrm{pred}\big(\langle z_u, z_v \rangle\big) \in \R.
\]
We train $\mathrm{enc}$ and $\mathrm{pred}$ on the \emph{original} graph to score true edges higher than non-edges using a \texttt{LogSigmoid} loss on $s_{u,v}$. At test time, we convert the logit to a calibrated pairwise \emph{confidence}:
\[
\kappa(u, v) := \sigma(s_{u,v}) \in [0,1], \qquad \sigma(a) = \frac{1}{1 + e^{-a}}.
\]
Ranking by $\kappa$ or by $s$ is equivalent, as $\sigma$ is monotonic.

\subsubsection{Confidence-aware edge assignment.}
\leavevmode
\vspace{4pt}

Let $\bar{\V}$ be the set of synthetic nodes, $\V$ the set of original nodes, and $\V' := \V \cup \bar{\V}$ the complete augmented node set. For each candidate pair $(u, \hat v) \in \V' \times \bar{\V}$, we compute $\kappa(u, \hat v)$.

\vspace{3pt}
\noindent
\textbf{Global top-$K$ selection.}
Construct the score matrix over all pairs $\mathcal{P} := \V' \times \bar{\V}$ and select the global top-$K$:
\vspace{-3pt}
\[
\bar{\E} \;=\; \operatorname*{arg\,topK}_{K,\,(u, \hat v) \in \mathcal{P}} \;\kappa(u, \hat v).
\]
\vspace{-7pt}

\noindent
Here $K = k\,|\bar{\V}|$, yielding an average of $k$ edges per synthetic node. The final augmented graph is 
$\G' \;:=\; (\V',\,\E \cup \bar{\E}),$
which is used for downstream node classification.

\paragraph{\textbf{Interpretation.} Why confidence?}
Noisy generations exist. The learned gate $\kappa$ (calibrated on real edges)
prioritizes reliable anchors: high-agreement neighborhoods \emph{pull} a synthetic node
toward the correct region, while uniformly low agreement leaves it \emph{isolated}, preventing error propagation.

\subsubsection{Denoising in GNN Aggregation.}
\leavevmode
\vspace{4pt}

\emph{Aggregator.}
Adapting \cite{kipf2017semisupervisedclassificationgraphconvolutional,veličković2018graphattentionnetworks}, consider a generic message-passing layer that updates the hidden representation $h^{(\ell)}_v \in \R^{p}$ of node $v$ at layer $\ell$ by blending its own transformed features with a weighted sum of its neighbors’ transformed features:
\begin{equation}
\label{eq:agg}
h^{(\ell+1)}_v \;=\;
\alpha\,\W\,h^{(\ell)}_v
\;+\;(1-\alpha)\!\!\sum_{u\in\mathcal N(v)}\!\beta_{vu}\,\W\,h^{(\ell)}_u,
\end{equation}
where $\alpha \in [0,1]$ is a mixing coefficient, $\W \in \R^{p \times p}$ is a trainable weight matrix, $\mathcal N(v)$ denotes the neighbor set of $v$ in $\G'$, and $\beta_{vu} \ge 0$ are neighbor weights with $\sum_{u \in \mathcal N(v)} \beta_{vu} = 1$.

\paragraph{Representation manifolds.}
After training, each class $c\in\{1,\dots,C\}$ forms a representation manifold $\mathcal M_c'\subset\R^{p}$ in the learned feature space.  
For any set $\mathcal S\subset\R^{p}$, write $\mathrm{dist}(\xi,\mathcal S)=\inf_{y\in\mathcal S}\|\xi-y\|_2$ for the Euclidean distance from a point $\xi$ to the closest point in $\mathcal S$. 


\label{sec:pull}
\begin{theorem}[~Confidence $\Rightarrow$ pulling~]
\label{thm:pull}
Let $\hat v\in\bar{\V}$ be synthetic with label $c$ and define the confidence-filtered anchor set (for threshold $\tau\in(0,1)$)
\[
\mathcal N(\hat v)\;:=\;\{\,u\in\V:\kappa(\hat v, u)\ge\tau\,\}\neq\varnothing.
\]
Assume every neighbor $u\in\mathcal N(\hat v)$ satisfies
$\mathrm{dist}\!\bigl(h^{(\ell)}_u,\mathcal M_c'\bigr)\le \varepsilon$ for some $\varepsilon\ge 0$.
Then there exists a contraction factor $\varrho \in(0,1)$ (independent of $\kappa$) such that
\[
\mathrm{dist}\bigl(h^{(\ell+1)}_{\hat v},\mathcal M_c'\bigr)
\;\le\;
\varrho\;\mathrm{dist}\bigl(h^{(\ell)}_{\hat v},\mathcal M_c'\bigr),
\]
and iterating \eqref{eq:agg} yields exponential convergence toward $\mathcal M_c'$
under standard non-expansiveness conditions on \eqref{eq:agg}
\cite{li2018deeperinsightsgraphconvolutional,Gama_2020,cordonnier2025convergencemessagepassinggraph}.
\end{theorem}

\emph{\textbf{Interpretation.}}
High-confidence anchors act as attractors; message passing contracts the synthetic representation
toward the class-$c$ manifold, mitigating residual generation noise.


\begin{theorem}[~No confidence $\Rightarrow$ isolation~]
\label{thm:isol}
If $\kappa(\hat v, u)<\tau$ for all $u\in\V$, then
$\mathcal N(\hat v)=\varnothing$ and the update reduces to
$h^{(\ell+1)}_{\hat v}=\alpha\,\W\,h^{(\ell)}_{\hat v}$ for all $\ell$.
Thus $\hat v$ neither influences nor is influenced by $\G'$.
\end{theorem}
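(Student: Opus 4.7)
The plan is to proceed directly from the definitions, since the claim is essentially a consequence of chaining Definition~\ref{def:conf} (edge assignment) with the aggregator in Equation~\eqref{eq:agg}. First I would invoke Definition~\ref{def:conf}: the hypothesis $\kappa(z_{\hat v},z_u)<\tau$ for every $u\in \V$ means that the edge inclusion criterion fails for every candidate endpoint, so no edge incident to $\hat v$ is ever added to $\E'\setminus\E$. Combined with the fact that $\hat v\in\bar{\V}$ is absent from the original vertex set $\V$ (and therefore cannot appear in any edge of $\E$), this gives $\mathcal N'(\hat v)=\varnothing$.

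Next I would substitute $\mathcal N(\hat v)=\varnothing$ into the generic message-passing update~\eqref{eq:agg}. The convention $\sum_{u\in\mathcal N(v)}\beta_{vu}=1$ is defined only on a nonempty neighborhood; with an empty neighborhood the aggregation term is the empty sum, which equals zero, collapsing the update to
\[
h^{(\ell+1)}_{\hat v}=\alpha\,\W\,h^{(\ell)}_{\hat v},
\]
as claimed. Since this recursion is purely a function of $\hat v$'s own previous-layer state (and the shared weights $\W$), no information from any other node in $\G'$ ever reaches $\hat v$.

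For the symmetric claim that $\hat v$ does not influence $\G'$, I would argue that the \emph{only} edges involving $\hat v$ in $\G'$ are those added by the confidence-based mechanism described in Section~\ref{sec:framework} (because $\hat v\notin\V$), and by the previous paragraph no such edges exist. Hence for every $u\in\V'$ we have $\hat v\notin\mathcal N(u)$, and $\hat v$ contributes no term to the aggregation in \eqref{eq:agg} for any other node at any layer; by induction on $\ell$, the embeddings $\{h^{(\ell)}_u:u\in\V'\setminus\{\hat v\}\}$ coincide with those produced on $\G'\setminus\{\hat v\}$.

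I do not expect a substantive obstacle here, as the statement is almost tautological once the definitions are unfolded; the only point requiring care is justifying the ``neither influences'' half of the conclusion, which depends on the implicit assumption that edge insertions are symmetric (or at least that no pre-existing edge of $\E$ can mention the new node $\hat v$). I would make that assumption explicit at the start of the proof so the bidirectional isolation follows cleanly from $\mathcal N'(\hat v)=\varnothing$.
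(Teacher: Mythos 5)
Your proposal is correct and follows essentially the same route as the paper, which dispatches the theorem as an immediate consequence of Definition~\ref{def:conf} and Equation~\eqref{eq:agg}; you simply unfold those two ingredients explicitly (empty neighborhood, empty sum, collapsed update, and the symmetric non-influence direction). The extra care you take with the ``neither influences'' half and the convention for an empty aggregation sum is a reasonable elaboration of the paper's one-line argument, not a different approach.
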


\emph{\textbf{Interpretation.}}
When agreement is uniformly low, the design quarantines the sample, preserving stability of the original graph.


\section{Empirical Results}
\label{sec:exp}
In this section, we present extensive experiments to validate our proposed method, emphasizing its benefits from both \textit{Semantic} and \textit{Structural} perspectives. We further evaluate the integrated framework to demonstrate its effectiveness in addressing long-tailed node classification. Specifically, our experiments aim to answer the following three research questions (RQs):

\begin{itemize}[leftmargin=1.5em]
    \item \textbf{RQ1:} Can large language models (LLMs) effectively generate boundary-enriching samples with manifold interpolation, enabling VRM for textual data distributions?
    
    \item \textbf{RQ2:} Can our confidence-based edge assignment effectively filter out noisy samples?
    
    \item \textbf{RQ3:} Does the entire proposed framework outperform existing state-of-the-art methods in graph learning on long-tailed TAGs?
\end{itemize}

We first describe our experimental setup, followed by an in-depth analysis to address the stated research questions. Our results consistently show that the proposed method significantly outperforms existing baselines, signifying its effectiveness for node classification on long-tailed TAGs.

\subsection{Experimental Setup}

\subsubsection{Datasets}

Following prior studies on long-tailed graph learning~\cite{Fu_2023, zhou2023graphsrdataaugmentationalgorithm, wang2021mixup}, we use Cora~\cite{mccallum2000automating}, PubMed~\cite{sen2008collective}, and Citeseer~\cite{sen2008collective}, along with three larger Amazon datasets, namely Photo, Computer, and Children~\cite{yan2023comprehensive}, for scalability analysis. 

To realistically simulate long-tailed distribution scenarios typical in real-world graphs, we follow the standard protocol from prior work~\cite{10.1145/3511808.3557381, 10.1145/3637528.3671880, Fu_2023, zhou2023graphsrdataaugmentationalgorithm, wang2021mixup, zhao2021graphsmote}. Specifically, we randomly select 20 nodes from each majority (head) class, and $20 \times \textit{imbalance\_ratio}$ nodes from each minority class. A summary of the detailed statistics of these datasets is provided in Table~\ref{tab-dataset} from Appendix \ref{appen_sce:analysis}.

\subsubsection{Metrics}

Following prior work on long-tailed graph classification~\cite{10.1145/3511808.3557381, 10.1145/3637528.3671880, Fu_2023, zhou2023graphsrdataaugmentationalgorithm, wang2021mixup, zhao2021graphsmote}, we evaluate model performance using four standard metrics: balanced accuracy (bAcc), macro-F1 score (F1), geometric mean (GMean), and traditional accuracy (Acc). All experiments are repeated five times with different fixed seeds, and we report the mean performance across these runs.

To further assess boundary-enriching effects, we report the Boundary Coverage Ratio (BCR), as defined in Definition~\ref{def:bcr}, and the Boundary Proximity Score (BPS). BPS is computed as the reciprocal of the Trust Score~\cite{jiang2018trusttrustclassifier}, measuring the ratio between the distance to the nearest out-of-class centroid and the distance to the in-class centroid.
Additionally, we report the In-Class Rate (ICR), defined as the percentage of samples predicted with the same label as their ground truth. Implementation details for these metrics are provided in Section~\ref{sec:manifold}.

\subsubsection{Baselines}
\label{subsubsec:baselines}


We compare our method against two categories of baselines (with details found in Appendix~\ref{appen_sec:details}): 

\begin{itemize}[leftmargin=1em]
    \item \textit{LLM-based data augmentation}: zero shot and few shot~\cite{li-etal-2023-synthetic,yu2024leveraginglargelanguagemodels}
    
    \item \textit{Long-tailed graph learning}: Oversampling~\cite{chawla2003c4}, SMOTE~\cite{chawla2002smote}, \\Embed-SMOTE\cite{ando2017deep}, MixupForGraph~\cite{wang2021mixup}, GraphSMOTE$_T$\cite{zhao2021graphsmote}, GraphSMOTE$_O$\cite{zhao2021graphsmote}, LTE4G\cite{10.1145/3511808.3557381}, and HierTail \cite{10.1145/3637528.3671880}.
\end{itemize}

To comprehensively evaluate our method, we design and implement the following three variants of \method, each corresponding to a specific VRM-based strategy. Detailed prompting strategies for each variant are provided in the Appendix~\ref{appen_sec:details}.
\begin{itemize}[leftmargin=1em]
    \item \textbf{\method$_O$:} Mirrors the oversampling method but generate a texts similar to the existing one rather than simple duplication from the conventional method.

    \item \textbf{\method$_S$:} Inspired by SMOTE~\cite{chawla2002smote}, interpolates with nearest neighbors within the same class.

    \item \textbf{\method$_M$:} Extends SMOTE~\cite{chawla2002smote} by interpolating embeddings with nearest neighbors from potentially different classes, inspired by Mixup techniques.
\end{itemize}

\subsubsection{Implementation Details}

By default, we generate texts with Llama3-8B-Instruct and use Sentence-BERT (\texttt{all-MiniLM-L6-v2}) as the text encoder $\phi(\cdot)$. Node classification is performed using a Graph Convolutional Network (GCN). See Appendix~\ref{appen_sec:details} for details about hyperparameters. 

\subsection{Semantic-aware Benefits (RQ1)}
\label{sec:manifold}

\paragraph{On-Manifold.}
To compare on-manifold LLM-based and off-manifold numeric interpolation, we evaluate node classification on multiple benchmarks. As shown in Figure~\ref{fig:embed},
\textit{Embed} leverages LLaMA 3.2-1B embeddings for interpolation, while \textit{SimCSE} and \textit{SBERT} serve as alternatives; 
our LLM-based method using LLaMA 3.2-1B consistently outperforms all numeric interpolation baselines. 

\begin{figure}[h!]
    \centering
    \includegraphics[width=0.95\linewidth]{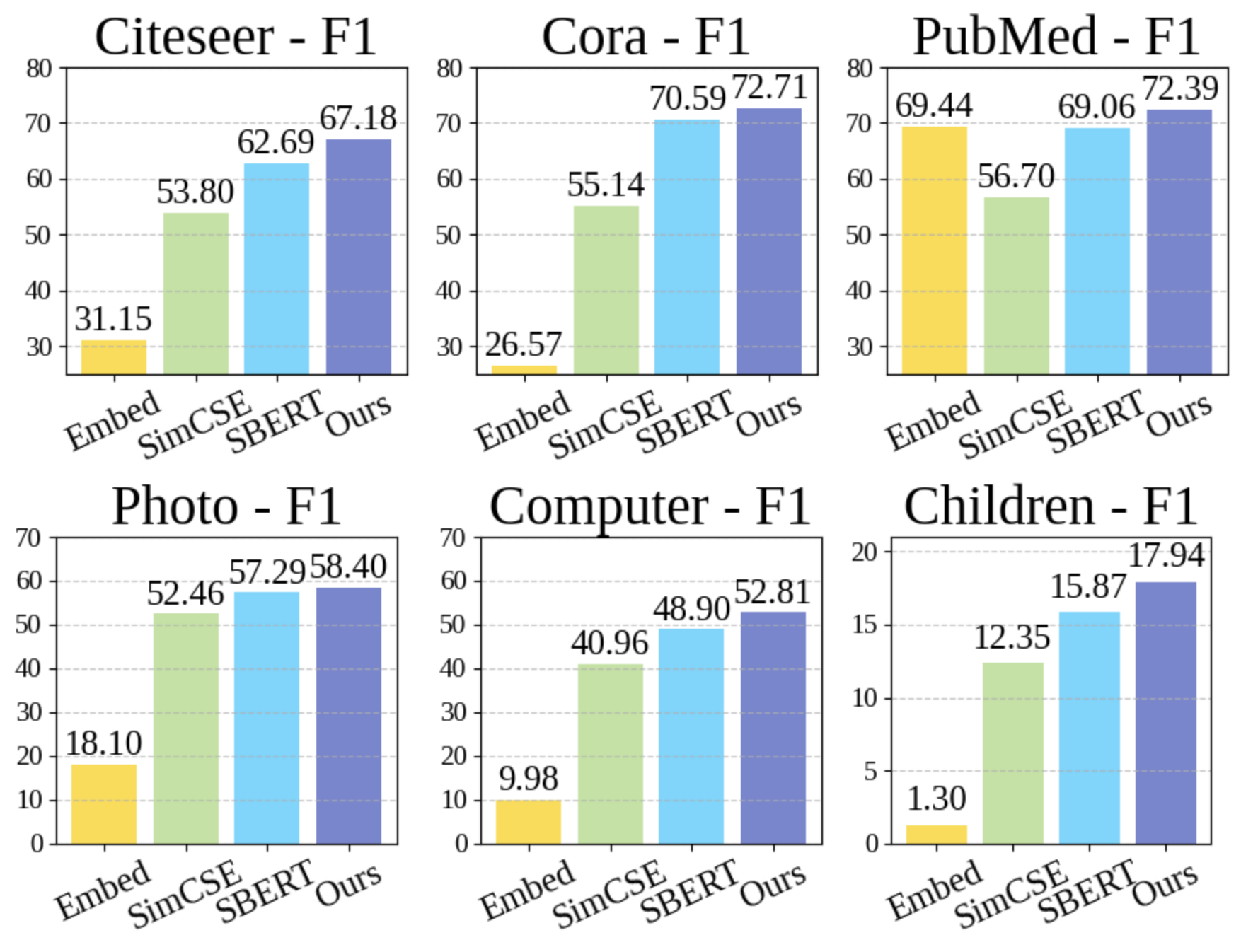}
    \vskip -1ex
   \caption{Node classification performance (F1) under various implementation of \method$_S$. \textit{Embed} uses Llama3.2-1B embeddings for interpolation. \textit{SimCSE} and \textit{SBERT} are compared as alternative embedding baselines. Our LLM-based method with Llama3.2-1B consistently outperforms all the embedding-based baselines. 
   } 
   \vskip -1ex
    \label{fig:embed}
\end{figure}

\paragraph{Boundary-enriching Effects.} We first visualize the embeddings from LLM-based and numeric interpolation. Figure~\ref{fig:umap_all} shows the UMAP projection of all six datasets, with the boundary of the original data circled.
While numeric interpolated samples tend to stay within the original boundary, LLM-generated samples often extend beyond it, enlarging the VRM decision region.

We continue the evaluation with BCR and BPS metrics, as shown in the Figure~\ref{fig:bps_bcr}. These metrics are computed for three settings: the original data, the combined set of original and interpolated data (\texttt{Interp+Orig}), and the interpolated data alone (\texttt{Interp-Only}).
Across all benchmarks, our LLM-based interpolation consistently outperforms numeric interpolation on both metrics, suggesting that LLM-generated samples align more effectively with decision boundaries. Particularly in some cases, LLM-based interpolation even exceeds the original data in the BCR, highlighting its potential to extend and enrich the original decision boundary.

\begin{figure}[t!]
    \centering
    \includegraphics[width=1\linewidth]{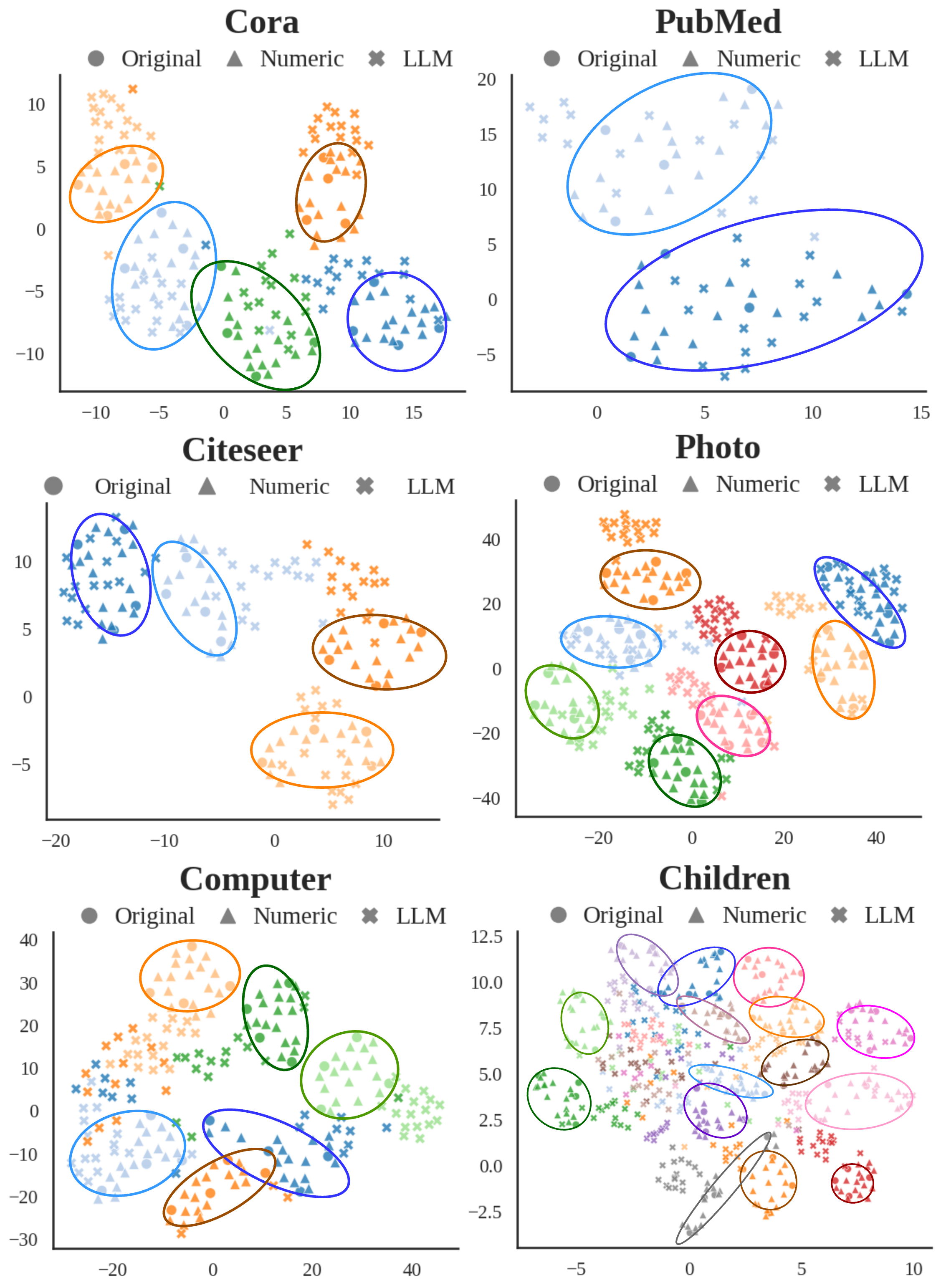}
    \caption{UMAP projection of the original and interpolated samples across various benchmarks---the original data boundary is outlined, with numeric samples mostly staying within and LLM-generated samples extending beyond.}
    \label{fig:umap_all}
     \vskip -1.5ex
\end{figure}
\begin{figure}[!ht]
    \centering
    \includegraphics[width=1\linewidth]{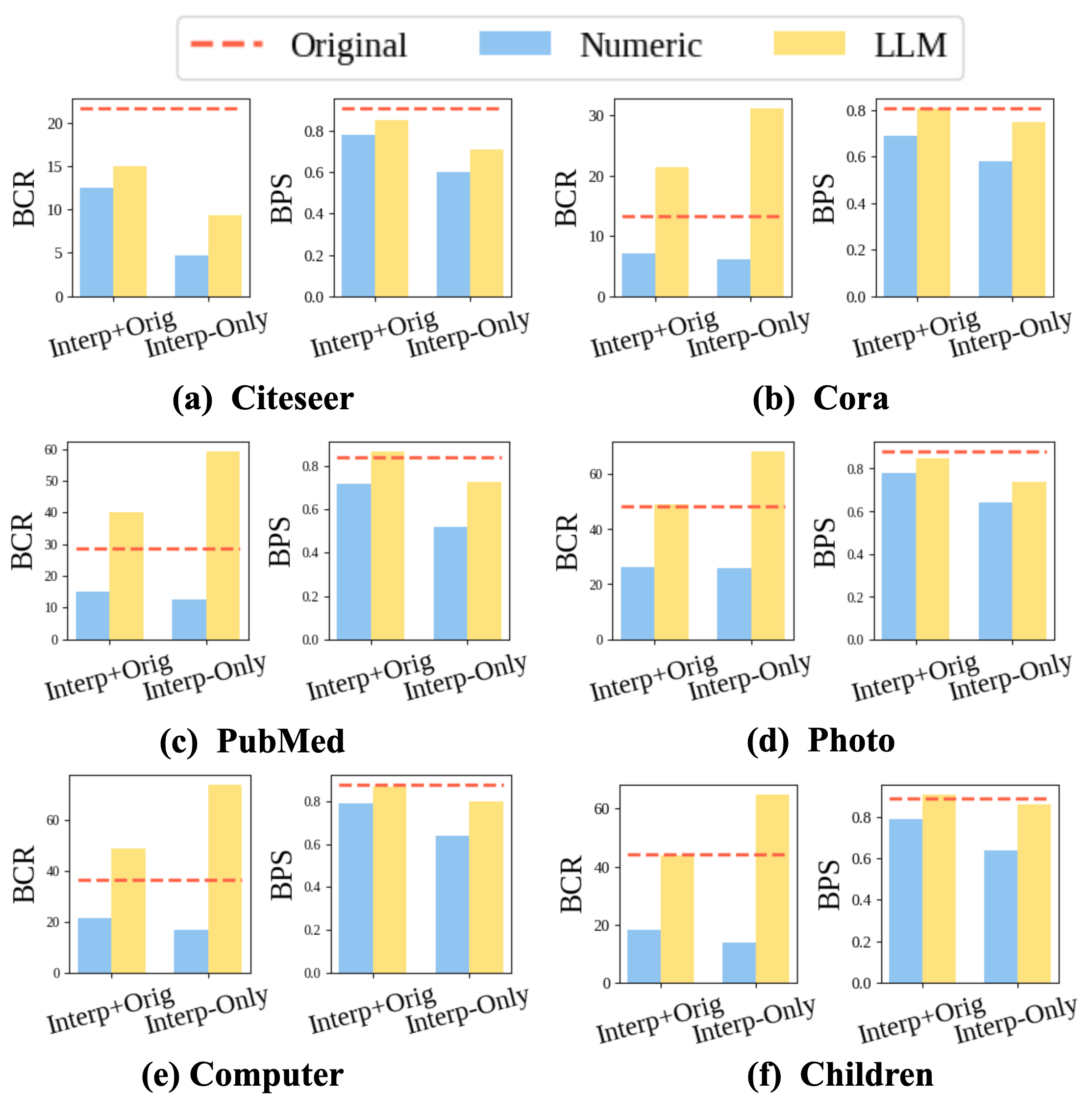}
    \vskip -1.5ex
       \caption{Boundary Coverage Rate (BCR) and Boundary Proximity Score (BPS) for numeric (blue) vs. LLM (yellow) interpolation in Interp+Orig and Interp-Only settings across six datasets; the red dashed line marks the original baseline.}
       \vskip -1ex
    \label{fig:bps_bcr}
\end{figure}

\begin{figure}[h!]
    \includegraphics[width=0.63\columnwidth]
    {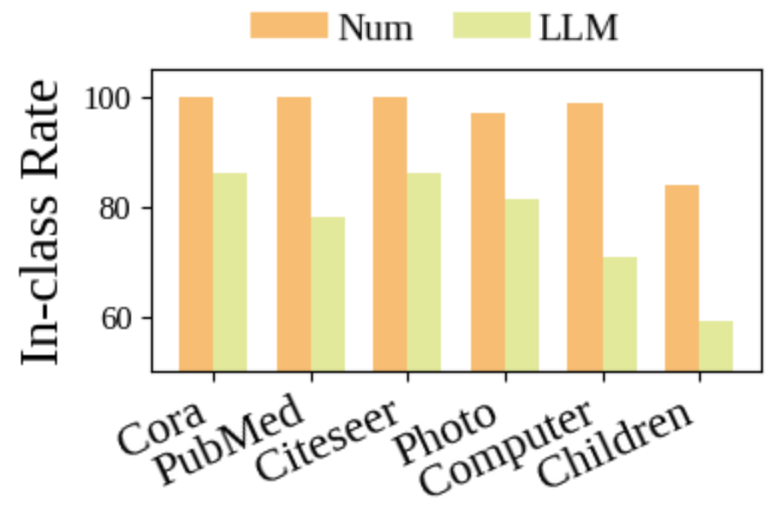}
    \vskip -2ex
    \caption{Average In-Class Rate (ICR) of augmented samples classified by an MLP trained on balanced data. Numerical interpolation achieves near-perfect class consistency (100\% ICR), while LLM-based samples yield lower ICRs, reflecting increased proximity to decision boundaries.}
    \label{fig:icr_all}
    \vskip -2ex
\end{figure}

In addition, we train a separate MLP classifier for each dataset to evaluate the class consistency of the augmented data. Each classifier is trained on a balanced dataset, where all classes have an equal number of training samples. The average In-Class Rate (ICR)—the proportion of augmented samples classified into their intended class—is reported in Figure~\ref{fig:icr_all}. As we can see, Numerical interpolation achieves 100\% ICR on Cora, PubMed, and Citeseer, and nearly 100\% on others, reflecting high class consistency. In contrast, LLM-based samples show lower ICRs, indicating proximity to decision boundaries that causes occasional misclassification by MLP.

\paragraph{LLM as Vicinal Risk Minimization.} We evaluate the effectiveness of LLM-based interpolation within the Vicinal Risk Minimization (VRM) framework. As shown in Table~\ref{tab-lbaseline}, interpolation-based prompting significantly enhances the quality of synthesized textual data, resulting in improved performance on downstream node classification tasks.

\begin{table*}[!t]
\centering
\scriptsize
\caption{Comparison of VRM-based LLM interpolation with standard LLM augmentation. Subscripts $_O$, $_M$, and $_S$ denote the prompting strategies of their corresponding \method~variants, but without edge assignment. Best and second-best results are emboldened and underlined, respectively.}
 \vskip -2ex
\begin{adjustbox}{width=1\textwidth}
\setlength{\tabcolsep}{4pt}
\setlength{\extrarowheight}{-2.5pt}
\begin{tabular}{lccclccclcccl}
\toprule
\textbf{Method} & \multicolumn{4}{c}{\textbf{Cora}}& \multicolumn{4}{c}{\textbf{Pubmed}}& \multicolumn{4}{c}{\textbf{Citeseer}}\\
\cmidrule(lr){2-5} \cmidrule(lr){6-9} \cmidrule(lr){10-13}
 & Acc& F1 &  GMean&bAcc& Acc& F1 & GMean&bAcc& Acc& F1 & GMean&bAcc\\
\midrule
Original& 72.5$\pm$1.4& 70.4$\pm$1.1&  84.5$\pm$0.6&74.7$\pm$0.9&55.1$\pm$8.2& 47.8$\pm$15.0&  62.2$\pm$7.9&51.5$\pm$9.7&  53.7$\pm$0.1& 49.6$\pm$2.2&  71.4$\pm$0.9&55.9$\pm$1.3\\
 Zero-shot& 68.3$\pm$0.5& 63.3$\pm$0.7& 81.1$\pm$0.4&69.4$\pm$0.7& 71.4$\pm$0.4& 71.4$\pm$0.5& 78.3$\pm$0.4&72.3$\pm$0.5& 64.8$\pm$0.5& 62.1$\pm$0.5&78.0$\pm$0.3&65.2$\pm$0.5\\
Few-shots&73.9$\pm$0.6&\textbf{72.7$\pm$0.6}&85.4$\pm$0.2&76.1$\pm$0.3& \underline{72.0$\pm$0.3}& 72.0$\pm$0.3& \underline{79.2$\pm$0.2}&\underline{73.6$\pm$0.3}& 68.2$\pm$0.9& 65.6$\pm$0.7& 79.8$\pm$0.4&67.8$\pm$0.5\\
\midrule
LLM$_O$&73.7$\pm$0.7&72.1$\pm$0.7&85.3$\pm$0.3&76.1$\pm$0.4& 71.5$\pm$0.3& 71.4$\pm$0.3& 78.8$\pm$0.3&73.1$\pm$0.4& \textbf{69.6$\pm$0.8}& \textbf{67.0$\pm$0.6}& \textbf{80.5$\pm$0.4}&\textbf{68.8$\pm$0.5}\\
LLM$_M$& \underline{74.2$\pm$0.4}& \underline{72.6$\pm$0.3}& \textbf{85.6$\pm$0.1}&\textbf{76.5$\pm$0.2}& 71.9$\pm$0.4& \underline{72.0$\pm$0.4}& 79.0$\pm$0.2&73.4$\pm$0.3& \underline{69.5$\pm$0.6}& \underline{66.8$\pm$0.4}& \underline{80.3$\pm$0.2}&\underline{68.6$\pm$0.3}\\
LLM$_S$& \textbf{74.5$\pm$0.4}& 72.5$\pm$0.5& \underline{85.5$\pm$0.2}&\underline{76.3$\pm$0.3}& \textbf{72.5$\pm$0.4}& \textbf{72.5$\pm$0.4}& \textbf{79.4$\pm$0.3}&\textbf{73.8$\pm$0.3}& 69.2$\pm$0.6& 66.5$\pm$0.5& 80.1$\pm$0.3&68.3$\pm$0.5\\
\bottomrule
\end{tabular}
\end{adjustbox}

\label{tab-lbaseline}

\end{table*}

\begin{figure*}
    \includegraphics[width=0.98\linewidth]
    {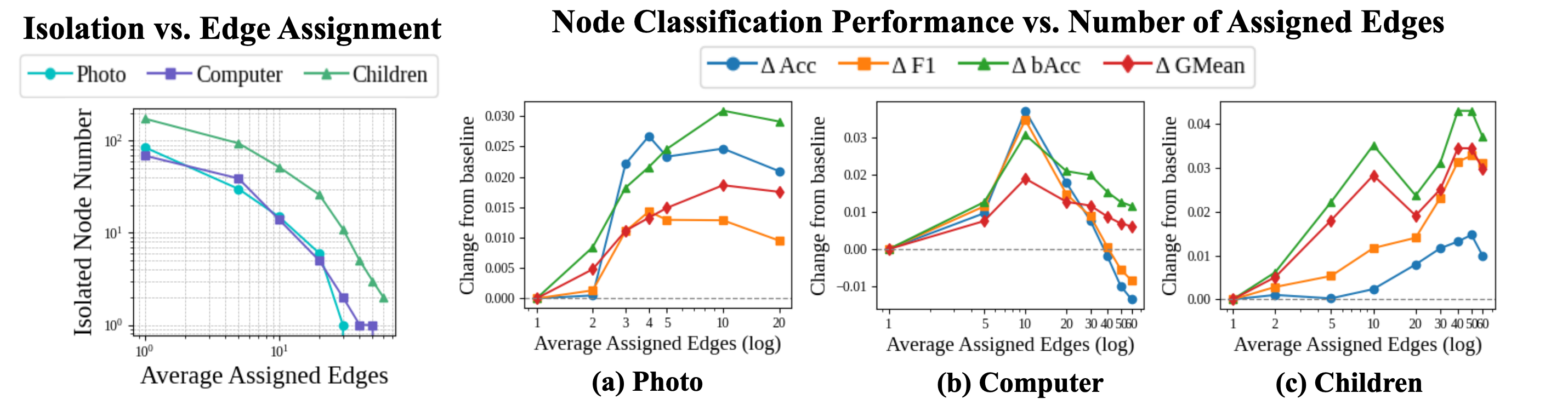}
    \vskip -0.5ex
    \caption{
\textbf{Impact of Number of Assigned Edges.}
\textit{Left:} The number of isolated nodes (log scale) decreases as the average number of assigned edges increases for the Photo, Computer, and Children datasets.
\textit{Right:} Effect of average assigned edges (log scale) on node classification for (a) Photo, (b) Computer, and (c) Children;
performance is reported as changes from the baseline ($n{=}1$).
}
    \label{fig:topk}
\end{figure*}

\subsection{Necessity of Confidence-based Edge Assignment (RQ2)}
\label{sec:edge}

\paragraph{Isolating Noisy Samples.}

 We instantiate Definition~\ref{def:bcr} using a \emph{top–$k$} rule, where each synthetic node retains its $k$ highest-confidence edges. The value of $k$ is set to \(\# augmented\_nodes \times n\), and we vary $n$ to study its effect on node isolation and classification performance.
The left part of Figure~\ref{fig:topk} shows that the number of isolated nodes ($\log$-scaled) drop as $\log(n)$ increases. The right part of Figure~\ref{fig:topk} reveals bell-shaped performance curves with respect to $\log(n)$ on the Photo, Computer, and Children datasets, measured relative to the baselines of $n{=}1$.
These trends support our hypothesis:
\begin{itemize}[leftmargin=1em]
    \item \textbf{Too small $k$ $\rightarrow$ isolation.}
  Many synthetic nodes become singletons
  (Theorem~\ref{thm:isol}), so the minority manifold is
  under-populated and performance suffers.
  \item
  \textbf{Too large $k$ $\rightarrow$ over-smoothing.}
  Excessive connectivity causes repeated averaging in Eq.\eqref{eq:agg}, pushing node representations toward a global mean and resulting in over-smoothing\cite{li2024oversmoothingnightmaregraphcontrastive}.
\end{itemize}

\vspace{-8pt}

\paragraph{Effectiveness of Confidence Function.}

We compare two edge assignment strategies for augmented nodes: edge duplication from the target node (without the ‘${C}$’ subscript) and edge assignment with our confidence-based function (denoted by ‘${C}$’).
Table~\ref{tab:gcn} shows that our confidence-based edge assignment consistently outperforms naive edge duplication under LLM-based interpolation. A similar trend holds for numeric interpolation, with the exception of Cora and Photo, likely due to the generation of off-manifold samples that our confidence mechanism successfully isolates.

\begin{table*}[!t]
\centering
\caption{Comparison of \method ~with previous long-tail graph learning baselines.}
 \vskip -1.5ex
\begin{adjustbox}{width=1\textwidth}
 \setlength{\tabcolsep}{4pt}
 \setlength{\extrarowheight}{-2.5pt}
\begin{tabular}{lccclccclcccl}
\toprule
\textbf{Method} & \multicolumn{4}{c}{\textbf{Cora}}& \multicolumn{4}{c}{\textbf{Pubmed}}& \multicolumn{4}{c}{\textbf{Citeseer}}\\
\cmidrule(lr){2-5} \cmidrule(lr){6-9} \cmidrule(lr){10-13}
 & Acc& F1 & GMean&bAcc& Acc& F1 & GMean&bAcc& Acc& F1 & GMean&bAcc\\
\midrule

 SMOTE& 71.8$\pm$2.7& 70.8$\pm$2.1& 82.7$\pm$1.7& 71.8$\pm$2.7& 68.5$\pm$6.2& 65.5$\pm$10.1& 75.9$\pm$4.8& 68.5$\pm$6.2& 50.7$\pm$3.0& 47.9$\pm$3.1& 67.6$\pm$2.2&50.7$\pm$3.0\\
   Oversampling& 72.9$\pm$1.3& 72.0$\pm$0.9&  83.4$\pm$0.9&72.9$\pm$1.3& 58.5$\pm$6.3& 50.4$\pm$6.5&  68.1$\pm$5.1&58.5$\pm$6.3& 52.0$\pm$4.9& 49.6$\pm$4.7& 68.5$\pm$3.6&52.0$\pm$4.9\\
 EmbedSMOTE& 70.0$\pm$1.6& 69.8$\pm$1.4& 81.5$\pm$1.0& 70.0$\pm$1.6& 63.4$\pm$7.6& 60.3$\pm$9.7& 72.0$\pm$5.9& 63.4$\pm$7.6& 47.5$\pm$2.0& 45.4$\pm$2.5& 65.2$\pm$1.5&47.5$\pm$2.0\\
 GraphSMOTE$_T$& 74.4$\pm$4.0& \textbf{74.2$\pm$4.3}& 84.4$\pm$2.6& 74.4$\pm$4.0& 67.9$\pm$5.0& 67.8$\pm$5.0& 75.5$\pm$3.9& 67.9$\pm$5.0& 50.5$\pm$2.8& 48.1$\pm$3.2& 67.4$\pm$2.1&50.5$\pm$2.8\\
GraphSMOTE$_O$& 74.3$\pm$3.7& 74.1$\pm$4.9&  84.3$\pm$3.0&74.3$\pm$4.7&69.7$\pm$5.0& 69.9$\pm$4.8&  76.9$\pm$3.9&69.7$\pm$5.0&  52.0$\pm$1.6& 49.8$\pm$1.8&  68.6$\pm$1.2&52.0$\pm$1.6\\
MixupForGraph &66.4$\pm$2.7&63.32$\pm$2.7&80.1$\pm$1.6&68.0$\pm$2.5& 69.6$\pm$0.8& 68.7$\pm$1.1& 75.9$\pm$1.1&68.7$\pm$1.6& 63.2$\pm$1.9& 59.4$\pm$1.9& 75.1$\pm$1.8&60.9$\pm$2.5\\
 HierTail \footnotemark[1] & 72.8$\pm$5.3& 73.3$\pm$4.7& 83.3$\pm$3.4& 72.8$\pm$5.3& 72.4$\pm$8.2& 72.4$\pm$7.7& 79.0$\pm$6.3& 72.5$\pm$8.0& 49.6$\pm$9.6& 44.5$\pm$11.3& 66.5$\pm$7.4&49.6$\pm$9.6\\
 LTE4G  \footnotemark[1]
 & 74.2$\pm$3.8& 74.1$\pm$3.8& 84.3$\pm$2.4& 74.2$\pm$3.8& 72.9$\pm$1.5& 72.6$\pm$1.9& 79.4$\pm$1.2& 72.9$\pm$1.5& 51.4$\pm$3.8& 49.2$\pm$3.7& 68.1$\pm$2.8&51.4$\pm$3.8\\

 \midrule
\method$_{O}$& 72.1$\pm$0.1& 70.9$\pm$0.1& 84.5$\pm$0.3&74.8$\pm$0.4& 72.8$\pm$0.3& 73.0$\pm$0.4& 80.2$\pm$0.2&75.0$\pm$0.2& 69.9$\pm$0.2& 66.8$\pm$0.2& 79.9$\pm$0.1&67.8$\pm$0.2\\

\method$_{M}$ & 73.0$\pm$0.4& 71.8$\pm$0.3& 85.1$\pm$0.2&75.8$\pm$0.4& 71.3$\pm$0.2& 71.2$\pm$0.3& 79.1$\pm$0.1&73.6$\pm$0.3&69.5$\pm$0.3&65.9$\pm$0.4&79.3$\pm$0.3&66.9$\pm$0.5\\

\method$_{S}$  & \textbf{76.4$\pm$0.3}& 74.1$\pm$0.3& \textbf{85.8$\pm$0.2}&\textbf{76.6$\pm$0.3}& \textbf{76.3$\pm$0.7}& \textbf{76.4$\pm$0.8}& \textbf{82.5$\pm$0.5}&\textbf{77.6$\pm$0.7}& \textbf{ 69.9$\pm$0.3}&\textbf{ 66.9$\pm$0.3}&\textbf{80.1$\pm$0.3}&\textbf{68.1$\pm$0.4}\\
\bottomrule
\end{tabular}
\end{adjustbox}


\label{tab-gbaseline}

\begin{tablenotes}
    \centering 
      \footnotesize
      \item \footnotemark[1] Modified code while running on PubMed dataset to prevent out-of-memory issues due to scalability.
\end{tablenotes}
\end{table*}


\subsection{Enhancement in Long-tailed Graph Learning (RQ3)}

\begin{figure*}[t!]
    \includegraphics[width=0.98\textwidth]
    {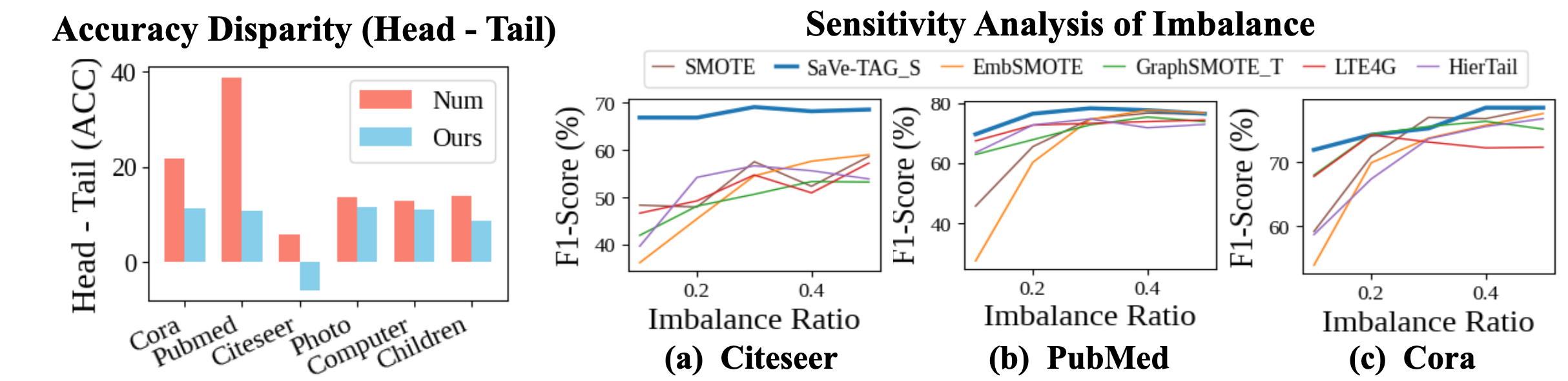}
    \vskip -0.75ex
    \caption{Node Classification Enhancement. \textit{Left:} \method$_S$ achieves a smaller \texttt{Head – Tail} accuracy gap than numeric interpolation, indicating better class balance.\textit{ Right:} \method$_S$ (bold) maintains more stable performance with less fluctuation across imbalance ratios.}
    \label{fig:lt}
    \vskip -0.75ex
\end{figure*}

\begin{table}[t!]
\footnotesize
\setlength{\tabcolsep}{4pt}
 \setlength{\extrarowheight}{-1pt}
\centering
\caption{\textbf{Ablation Studies.} Node classification under three variants of \textsc{SaVe-TAG}. \textit{Origin} uses no augmentation; \textit{Num} and \textit{LLM} apply numeric and LLM-based interpolation, respectively. Subscript $_C$ indicates the use of confidence-based edge generation versus naive edge duplication otherwise.}
\vskip -1ex
\begin{adjustbox}{width=1\linewidth}
\begin{tabular}{@{}llllllll}
\toprule
\multirow{2}{*}{\textbf{Dataset}}  & \multirow{2}{*}{\textbf{Method}} & \multicolumn{2}{c}{\textbf{\method $_S$}}& \multicolumn{2}{c}{\textbf{\method $_M$}}& \multicolumn{2}{c}{\textbf{\method $_O$}}\\ 
\cmidrule(lr){3-4} \cmidrule(lr){5-6} \cmidrule(lr){7-8}
                          &                         
                          & F1 & ACC & F1 & ACC & F1 & ACC \\ 
\midrule
\multirow{5}{*}{\shortstack{Average \\ across \\ all 6 \\ datasets}}
 & Origin   & 45.00 & 47.55 & 45.00 & 47.55 & 45.00 & 47.55 \\
 & Num      & 54.07 & 55.78 & 55.09 & 56.64 & 54.32 & 56.09 \\
 & Num$_C$  & 54.86 & 56.57 & 56.37 & 57.54 & 54.35 & 55.64 \\
 & LLM      & \underline{56.79} & \underline{58.49} &
              \underline{56.90} & \underline{58.18} &
              \underline{56.39} & \underline{58.32} \\
 & LLM$_C$  & \textbf{58.85} & \textbf{61.03} &
              \textbf{59.02} & \textbf{61.74} &
              \textbf{59.24} & \textbf{62.05} \\
\bottomrule
\end{tabular}

\end{adjustbox}

\label{tab:gcn}
\vskip -1ex
\end{table}

\paragraph{Performance.}
Table~\ref{tab-gbaseline} shows that our method, particularly \method $_S$,  outperforms prior baselines in node classification on long-tailed text-attributed graphs.
Moreover, our method effectively narrows the disparity between the head (majority) and tail (minority) classes. As shown in the left part of Figure~\ref{fig:lt}, across all benchmarks, \method$_S$ consistently achieves a smaller disparity (the average accuracy gap between head and tail classes) than numeric interpolation, underscoring its effectiveness in mitigating class imbalance in long-tailed learning.

\vspace{-2pt}
\paragraph{Stability.}
We further perform a sensitivity analysis to evaluate the stability of \method~ under varying imbalance ratios. As shown in the right part of Figure~\ref{fig:lt}, \method $_S$ exhibits smaller fluctuations compared to other baselines as imbalance ratios vary.

\subsection{Ablation Studies}
We conduct the ablation studies to show the significance of both LLM-based interpolation and confidence-based edge assignment, shown in Table~\ref{tab:gcn}.

\textbf{LLM v.s. Num. }Leveraging the generative capabilities of LLMs for VRM proves to be more effective than embedding interpolation. For instance, \method$_{S}$ achieves a 10.3\% increase in average accuracy on the PubMed dataset compared to its numeric counterpart. 

\textbf{w/ $_C$, v.s. w/o $_C$.} With confidence function (denoted as $_C$) enhances the node classification performance. Same analysis has been discussed in Section \ref{sec:edge}

\section{Discussion}
\label{sec:dis}
\paragraph{Related Work.} 
Here we outline relevant research topics. 
\textbf{Vicinal Risk Minimization (VRM):  }VRM~\cite{10.5555/3008751.3008809} extends ERM \cite{vapnik1998statistical} by estimating risk from local vicinity distributions, enhancing model generalization. It underlies methods like Mixup~\cite{zhang2018mixupempiricalriskminimization} and SMOTE~\cite{chawla2002smote}, which create synthetic samples via interpolation. \textbf{Long-tailed Graph Learning:} In graph learning, long-tailed node classification is typically addressed via hierarchical task grouping like HierTail~\cite{10.1145/3637528.3671880} and expert models like LTE4G~\cite{10.1145/3511808.3557381}. \textbf{Large Language Model (LLM) Data Augmentation:} Generally, LLMs augment data by generating contextually relevant text via zero-shot or few-shot instruction-based prompts~\cite{li2024empoweringlargelanguagemodels,li-etal-2023-synthetic}.

\paragraph{Limitations.}
While LLM-based interpolation offers clear benefits, it is more resource-intensive than numeric embedding methods. We include a cost analysis in Appendix~\ref{appen_sce:analysis}. Despite the overhead, the trade-off is worthwhile: our framework supports small open-source language models, needs no extensive training/tuning, and applies LLMs only once during augmentation—allowing reuse across tasks. Most importantly, it consistently outperforms baselines, confirming the practical value of LLM interpolation. 

Another limitation is our exclusive focus on GNN backbones, overlooking non-graph alternatives such as MLPs and transformer-based encoders. This focus enabled a deeper exploration of topology-aware filtering, but limits the generalizability of our findings to non-graph models. To partially address this, Appendix~\ref{appen_sec:experiments} reports results with an MLP, which show similar trends and suggest that our method may generalize beyond GNNs or graph-structured data.

\section{Conclusion}
This paper presents a novel interpolation framework for long-tailed node classification that unifies semantic and structural signals. With LLMs, we generate manifold-consistent, boundary-enriching samples to extend VRM to the textual domain. To mitigate out-of-distribution generation, we introduce a confidence-based edge assignment that filters synthetic nodes via graph connectivity. Theoretically and empirically, our method outperforms numeric interpolation, standard LLM augmentation, and prior long-tailed graph learning across benchmarks.

\section*{Acknowledgements} This research is supported by the National Science Foundation (NSF) under grant numbers IIS2239881 and ECCS2325417.

\clearpage

\bibliographystyle{ACM-Reference-Format}
\balance
\bibliography{ref}

\newpage
\appendix


\newpage


\section{Additional Experimental Results}\label{appen_sec:experiments}

\paragraph{Classifier Variants.}
To assess the generalization of our method across different classifier architectures, we conduct additional ablation studies using one GNN-based backbone (SAGE) and one non-graph-based model (MLP). Since MLP does not incorporate structural information (i.e., it lacks neighborhood aggregation), the confidence-based edge assignment has no effect. Therefore, we report only the results without the subscript $_C$. Table~\ref{tab:sage_mlp} shows \method~ effectively addresses long-tailed classification with both GNNs and MLP; 
its strong performance with MLP highlights its potential for text classification and broader applications.

\vskip -1ex
\begin{table}[h!]
\caption{\textbf{Ablation Studies.} Node classification performance using SAGE and MLP under three variants of \method. }
\vspace{-2ex}
\scriptsize
\setlength{\tabcolsep}{1pt}
\begin{tabular}{lll|cc|cc|cc|cc|cc|cc}

\toprule
 
 & \multirow{2}{*}{\textbf{Model}} 
 &\multirow{2}{*}{\textbf{Method}} 
& \multicolumn{2}{c|}{\textbf{Cora}} 
& \multicolumn{2}{c|}{\textbf{PubMed}}
& \multicolumn{2}{c|}{\textbf{Citeseer}} 
& \multicolumn{2}{c|}{\textbf{Photo}} 
& \multicolumn{2}{c|}{\textbf{Computer}}
& \multicolumn{2}{c}{\textbf{Children}} \\
\cmidrule(lr){4-5} 
\cmidrule(lr){6-7} 
\cmidrule(lr){8-9} 
\cmidrule(lr){10-11} 
\cmidrule(lr){12-13} 
\cmidrule(lr){14-15} 

&& & \textbf{F1} & \textbf{Acc} 
  & \textbf{F1} & \textbf{Acc} 
  & \textbf{F1} & \textbf{Acc} 
  & \textbf{F1} & \textbf{Acc} 
  & \textbf{F1} & \textbf{Acc} 
  & \textbf{F1} & \textbf{Acc} \\
\midrule

 \multirow{6}{*}{\rotatebox{90}{\textbf{\method$_S$}\hspace{5ex}}}&\multirow{2}{*}{\textbf{MLP}}
  &Origin& 50.31 & 52.34 & 25.51 & 42.59 & 61.70 & 64.64 &  4.25 &  7.48 &  3.09 &  4.61 &  1.06 &  2.03 \\
 && Num& 58.28& 61.35& 64.63& 68.62& 61.70 & 64.64 & 34.96& 34.74& 28.19& 31.77& 15.31 & 14.85 \\
 && LLM& \textbf{64.13}& \textbf{66.80}& \textbf{72.77}& \textbf{72.07}& \textbf{64.04}& \textbf{66.25}& \textbf{43.26}& \textbf{40.79}& \textbf{37.65}& \textbf{40.26}& \textbf{16.64}& \textbf{16.83}\\

 \cmidrule(lr){2-15}
 &\multirow{4}{*}{\textbf{SAGE}}&Origin&67.5& 70.2& 60.7& 62.4& 52.9& 55.5& 47.4& 44.2& 36.5& 38.0& 2.5& 4.0\\
 && Num& 70.6& 73.1& 68.3& 68.6& 65.0& 68.0& 54.2& 52.1& 43.1& 47.3& 18.7& 18.6\\
 && Num$_C$& 70.3& 72.4& 69.2& 69.6& 65.8& \underline{69.4}& 50.6& 48.9& 49.3& 54.7& 19.4& 18.7\\
 && LLM& \underline{72.5}& \underline{74.5}& \underline{72.5}& \underline{72.5}& \textbf{67.4}& \textbf{70.0}& \textbf{57.7}& \underline{56.0}& \underline{48.6}& \underline{53.7}& \underline{20.1}& \textbf{20.8}\\
 && LLM$_C$& \textbf{74.0}& \textbf{76.4}& \textbf{76.4}& \textbf{76.3}& \underline{66.6}& 69.3& \underline{57.6}& \textbf{56.1}& \textbf{52.0}& \textbf{59.1}& \textbf{20.3}& \underline{20.3}\\
\midrule
 
 \multirow{6}{*}{\rotatebox{90}{\textbf{\method$_O$}\hspace{5ex}}}&\multirow{2}{*}{\textbf{MLP}}
   &Origin& 50.31 & 52.34 & 25.51 & 42.59 & 61.70 & 64.64 &  4.25 &  7.48 &  3.09 &  4.61 &  1.06 &  2.03 \\
 && Num& 57.85& 60.97& 66.06& 66.46& 60.23 & 63.38 & 35.47& 35.01& 28.65& 32.32& 15.89& 15.32\\
 && LLM& \textbf{63.02}& \textbf{64.12}& \textbf{70.25}& \textbf{69.88}& \textbf{64.68}& \textbf{67.07}& \textbf{43.34}& \textbf{42.15}& \textbf{38.18}& \textbf{42.74}& \textbf{18.65}& \textbf{20.38}\\
 
  \cmidrule(lr){2-15}
  &\multirow{4}{*}{\textbf{SAGE}}&Origin&67.5& 70.2& 60.7& 62.4& 52.9& 55.5& 47.4& 44.2& 36.5& 38.0& 2.5& 4.0\\
  && Num& 70.6& 73.8& 69.0& 69.3& 65.3& 68.6& 54.6& 53.5& 43.0& 47.1& 18.5& 19.0\\
 && Num$_C$& 69.1& \underline{70.9}& 68.3& 68.6& 65.5& 68.9& 49.0& 46.1& 42.9& 46.0& 18.6& 19.1\\
 && LLM& \textbf{72.1}& \textbf{73.7}& \underline{71.4}& \underline{71.6}& \textbf{67.5}& \textbf{70.3}& \underline{57.8}& \underline{56.1}& \underline{48.1}& \underline{53.9}& \underline{20.9}& \underline{21.8}\\
 && LLM$_C$& \underline{69.2}& 70.1& \textbf{72.5}& \textbf{72.3}& \underline{66.2}& \underline{69.0}& \textbf{58.0}& \textbf{57.3}& \textbf{52.5}& \textbf{59.9}& \textbf{22.5}& \textbf{23.8}\\
 
  \midrule
  \multirow{6}{*}{\rotatebox{90}{\textbf{\method$_M$}\hspace{5ex}}}&\multirow{2}{*}{\textbf{MLP}}
     &Origin& 50.31 & 52.34 & 25.51 & 42.59 & 61.70 & 64.64 &  4.25 &  7.48 &  3.09 &  4.61 &  1.06 &  2.03 \\
 && Num& 61.66& 62.89& 67.88& 67.77& 61.06 & 63.57 & 36.98& 35.39& 29.25& 33.24& 15.95& 15.95\\
 && LLM& \textbf{62.94}& \textbf{64.47}& \textbf{71.12}& \textbf{70.56}& \textbf{64.52}& \textbf{66.64}& \textbf{44.84}& \textbf{42.28}& \textbf{38.47}& \textbf{41.47}& \textbf{18.80}& \textbf{20.09}\\

  \cmidrule(lr){2-15}

 &\multirow{4}{*}{\textbf{SAGE}}&Origin&67.5& 70.2& 60.7& 62.4& 52.9& 55.5& 47.4& 44.2& 36.5& 38.0& 2.5& 4.0\\
 && Num& 71.9& 74.0& 69.8& 70.0& 64.8& 68.7& 56.0& 54.1& 46.0& 51.4& 19.4& 19.8\\

 && Num$_C$&  69.8& 70.1& 69.9& 69.7&  64.3& 68.6& 53.6&  51.7&  46.6& 51.4&  19.2&  19.6\\
 && LLM& \textbf{72.3}& \textbf{74.2}& \underline{71.5}& \underline{71.3}& \textbf{68.0}& \underline{70.8}& \underline{58.4}& \underline{56.7}& \underline{49.1}& \underline{54.8}& \underline{21.7}& \underline{23.1}\\
 && LLM$_C$& \underline{70.5}& \underline{72.5}& \textbf{75.7}& \textbf{75.2}& \underline{67.8}& \textbf{70.9}& \textbf{58.6}& \textbf{57.8}& \textbf{50.6}& \textbf{57.4}& \textbf{23.9}& \textbf{26.5}\\
\bottomrule
\end{tabular}

\begin{tablenotes}
    \centering 
      \footnotesize
      \item \textit{Origin} denotes no augmentation; \textit{Num} and \textit{LLM} apply numeric and LLM-based interpolation, respectively. The subscript $_C$ indicates confidence-based edge generation and its absence denotes naive edge duplication. The best results for both MLP/SAGE are in bold, and the second-best results for SAGE are underlined.
\end{tablenotes}

\vskip -7ex
\label{tab:sage_mlp}
\end{table}

\begin{table}[!h]
\centering
\scriptsize
\caption{Performance with additional LLM variants.}
\vskip -3ex
\setlength{\tabcolsep}{1pt}
\setlength{\extrarowheight}{-2.5pt}
\begin{tabular}{lccclccclcccl}
\toprule
\textbf{Model} & \multicolumn{4}{c}{\textbf{Cora}} &
                 \multicolumn{4}{c}{\textbf{PubMed}} &
                 \multicolumn{4}{c}{\textbf{Citeseer}} \\
\cmidrule(lr){2-5} 
\cmidrule(lr){6-9} 
\cmidrule(lr){10-13}
& Acc & F1 & GMean & bAcc & Acc & F1 & GMean & bAcc & Acc & F1 & GMean & bAcc \\
\midrule
Qwen2.5‑7B‑Instruct & 73.6 & 71.9 & 85.3 & 76.1 & 74.2 & 74.2 & 81.0 & 75.7 & 72.1 & 67.9 & 80.4 & 68.4 \\
Mistral‑8B‑Instruct‑2040& 74.3 & 72.9 & 85.2 & 75.8 & 73.5 & 73.5 & 80.8 & 75.7 & 71.1 & 66.7 & 79.7 & 67.3 \\
\bottomrule
\end{tabular}
\vskip -2ex
\label{tab-llm-baselines}
\end{table}

\paragraph{LLM Variants.}

 To further examine model variation, we conducted experiments with Qwen2.5‑7B‑Instruct and Mistral‑8B‑Instruct‑2040. The results using SaVe‑TAG
 are shown in Table \ref{tab-llm-baselines}.



\section{Implementation Details} \label{appen_sec:details}

\textbf{Code Implementation. }
\noindent Our model uses PyG \cite{fey2019fastgraphrepresentationlearning} and Sentence Transformer \cite{reimers2019sentence}. 
Experiments were conducted on Ubuntu 22.04.4 LTS with 128GB RAM and NVIDIA GeForce RTX 3090.

\noindent \textbf{Hyperparameters. }
\noindent Details of our setups for reproducibility can be found at \href{https://github.com/LWang-Laura/SaVe-TAG}{https://github.com/LWang-Laura/SaVe-TAG}.

\noindent \textbf{Datasets. }
\noindent We obtained Cora and PubMed from~\cite{chen2024exploring}; Citeseer, Photo, Children, and Computer datasets from~\cite{chen2024textspacegraphfoundationmodels} with descriptions in~\cite{yan2023comprehensive}. 

\vspace{6ex}

\noindent \textbf{Baselines Details. }

\noindent \textit{LLM-based Data Augmentation Baselines.}\\

\vspace{-2.5ex}
\begin{itemize}[leftmargin=1em]
    \item \textit{Zero-shot}~\cite{li-etal-2023-synthetic,yu2024leveraginglargelanguagemodels}: Utilizing a pre-trained LLM to generate textual attributes for a given target label without providing 
    examples.
    \item \textit{Few-shot}~\cite{li-etal-2023-synthetic,yu2024leveraginglargelanguagemodels}: Prompting an LLM to generate text 
    for a given target label, providing 
    examples for in-context learning.
\end{itemize}

\noindent \textit{Long-tailed Graph Learning Baselines.}\\

\vspace{-3ex}
\begin{itemize}[leftmargin=1em]
\item \textit{Oversampling}~\cite{chawla2003c4}: Repeatedly duplicates minority/tail nodes. 
\item \textit{SMOTE}~\cite{chawla2002smote}: Synthetic nodes interpolate 
tail nodes with neighbors and assigning edges by copying their neighbors' edges.
\item \textit{Embed-SMOTE}~\cite{ando2017deep}: Applying SMOTE on hidden layer embeddings rather than the input features.
\item \textit{MixupForGraph}~\cite{wang2021mixup}: 
Synthesizing data for minority classes by interpolating random pairs of hidden representation and labels.
\item \textit{GraphSMOTE}~\cite{zhao2021graphsmote}: Interpolating minority nodes with nearest neighbors and generating new edges via a co-trained link predictor; two variants $_T$ and $_O$, for discrete or continuous edges
\item \textit{LTE4G}~\cite{10.1145/3511808.3557381}: Clustering nodes by class and degree, training subset 
experts, and then distilling them for 
classification.
\item \textit{HierTail}~\cite{10.1145/3637528.3671880}: Extracting shared class information via hierarchical task grouping and balancing head–tail gradient contributions.

\end{itemize}

\noindent \textit{\method~Variants.}

The description of the three variants are listed in Section \ref{subsubsec:baselines} and a case study is provided in Figure~\ref{fig-DataAug}.

\begin{figure}[t!]
    \small
    \includegraphics[width=0.5\textwidth]{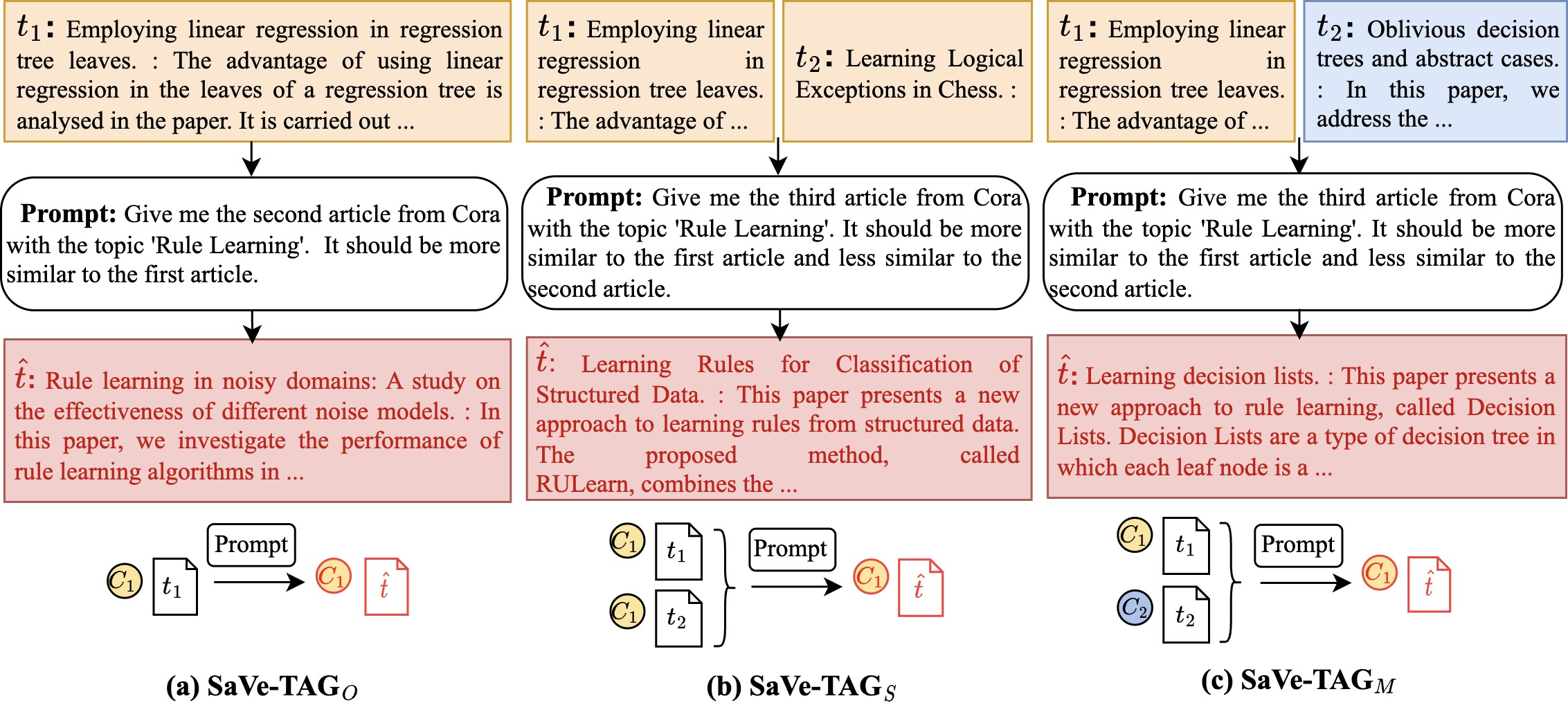} 
     \vskip -2ex
    \caption{
    A case study illustrating our three variants: 
    \\(a) \method$_O$, (b) \method$_S$, and (c) \method$_M$.
    }
    \label{fig-DataAug}
    \vskip -4ex
\end{figure}

\subsection{Prompt Design}
The prompt template for \method$_S$ and \method$_M$ share a common template in Table \ref{tab:st_mx_prompt}; \method$_O$ is shown in Table \ref{tab:prompt-o}. Templates are dataset-specific, with parameters detailed in Table \ref{tab-para}.
\vskip -2ex

\begin{table}[h!]
  \caption{Prompt template for  \method$_S$ and  \method$_M$ (\( \textcolor{blue}{\mathcal{C}_1}=\textcolor{blue}{\mathcal{C}_2}\) for  \method$_O$).}
   \vskip -2ex
  \centering
    \footnotesize
  \setlength{\fboxsep}{0pt}
  \renewcommand{\arraystretch}{0.6}   
  \begin{tabular}{@{}p{0.15\linewidth} p{0.80\linewidth}@{}}
    \toprule
    \textbf{System:}
      & You are a helpful AI assistant for generating 
        \colorbox{hlcolor}{\texttt{\{Task\}}} 
        from 
        \colorbox{hlcolor}{\texttt{\{Dataset\}}}, 
        where each 
        \colorbox{hlcolor}{\texttt{\{Text\}}} 
        follows the format 
        \texttt{<START>\colorbox{hlcolor}{\{Format\}}<END>}. \\

    \midrule
    \textbf{User:}
      & Give me the first 
        \colorbox{hlcolor}{\texttt{\{Text\}}} 
        from 
        \colorbox{hlcolor}{\texttt{\{Dataset\}}} 
        with topic 
        \textcolor{blue}{[$\mathcal{C}_1$]}. \\

    \cmidrule(l{4pt}r{4pt}){1-2}
    \textbf{Assistant:}
      & \texttt{<START>}
        \;\textcolor{blue}{[$t_1$]}\;
\texttt{<END>}.\\

    \midrule
    \textbf{User:}
      & Give me the second 
        \colorbox{hlcolor}{\texttt{\{Text\}}} 
        from 
        \colorbox{hlcolor}{\texttt{\{Dataset\}}} 
        with topic 
        \textcolor{blue}{[$\mathcal{C}_2$]}. \\

    \cmidrule(l{4pt}r{4pt}){1-2}
    \textbf{Assistant:}
      & \texttt{<START>}
        \;\textcolor{blue}{[$t_2$]}\;
\texttt{<END>}.\\

    \midrule
    \textbf{User:}
      & Give me the third 
        \colorbox{hlcolor}{\texttt{\{Text\}}} 
        from 
        \colorbox{hlcolor}{\texttt{\{Dataset\}}} 
        with topic 
        \textcolor{blue}{[$\mathcal{C}_2$]}. 
        It should be more similar to the first 
        \colorbox{hlcolor}{\texttt{\{Text\}}} 
        and less similar to the second 
        \colorbox{hlcolor}{\texttt{\{Text\}}}. \\

    \cmidrule(l{4pt}r{4pt}){1-2}
    \textbf{Assistant:}
      & \dots \\

    \bottomrule
  \end{tabular}
    \vskip -2ex

  \label{tab:st_mx_prompt}
\end{table}

\begin{table}[h!]
 \caption{Prompt template for \method$_O$ across all datasets.
 }
   \vskip -2.5ex
  \centering
  \footnotesize
  \setlength{\fboxsep}{0pt}
  \renewcommand{\arraystretch}{0.1} 
  \setlength{\tabcolsep}{3pt}                  
  \begin{tabular}{@{}p{0.15\linewidth} p{0.80\linewidth}@{}}
    \toprule
    \textbf{System:}
      & You are a helpful AI assistant for generating 
        \colorbox{hlcolor}{\texttt{\{Task\}}} 
        from 
        \colorbox{hlcolor}{\texttt{\{Dataset\}}}, 
        where each 
        \colorbox{hlcolor}{\texttt{\{Text\}}} 
        follows the format 
        \texttt{<START>\colorbox{hlcolor}{\{Format\}}<END>}. \\

    \midrule
    \textbf{User:}
      & Give me the first 
        \colorbox{hlcolor}{\texttt{\{Text\}}} 
        from 
        \colorbox{hlcolor}{\texttt{\{Dataset\}}} 
        with topic 
        \textcolor{blue}{[$\mathcal{C}_1$]}. \\

    \cmidrule(l{4pt}r{4pt}){1-2}
    \textbf{Assistant:}
      & \texttt{<START>}
        \;\textcolor{blue}{[$t_1$]}\;
\texttt{<END>}.\\

    \midrule
    \textbf{User:}
      & Give me the second 
        \colorbox{hlcolor}{\texttt{\{Text\}}} 
        from 
        \colorbox{hlcolor}{\texttt{\{Dataset\}}} 
        with topic 
        \textcolor{blue}{[$\mathcal{C}_1$]}. 
        It should be more similar to the first 
        \colorbox{hlcolor}{\texttt{\{Text\}}}. \\

    \cmidrule(l{4pt}r{4pt}){1-2}
    \textbf{Assistant:}
      & \dots \\

    \bottomrule
  \end{tabular}

  \label{tab:prompt-o}
\end{table}

\begin{table}[t!]
\vskip -1.5ex
\caption{Prompt template input parameters for each dataset.}
\vskip -2.5ex
 \begin{adjustbox}{width=0.47\textwidth}
\footnotesize
\centering
 \renewcommand{\arraystretch}{1.3} 
   \setlength{\fboxsep}{0pt}
\setlength\tabcolsep{1pt}
\setlength{\extrarowheight}{-4.5pt}
\begin{tabular}{llllll}
\toprule
    \colorbox{hlcolor}{\texttt{\{Dataset\}}} &  \colorbox{hlcolor}{\texttt{\{Task\}}}&  \colorbox{hlcolor}{\texttt{\{Text\}}}  & \colorbox{hlcolor}{\texttt{\{Format\}}} \\
\midrule
\textbf{Cora} & `new academic articles'& `article'& `[New Title] : [New Abstract]'\textbackslash n'\\

\textbf{Pubmed} & `new academic articles'& `article'& 
\begin{tabular}[t]{@{}l@{}} 
`Title: [New Title] \\
Abstract: [New Abstract]'
\end{tabular}\\

\textbf{Citeseer} & `new academic articles'& `article'& `[New Title] : [New Abstract]'\textbackslash n'\\

\textbf{Photo} & `reviews of products from Amazon'& `review'& `Review: [New Review]'\\
 
\textbf{Computer} & `reviews of products from Amazon'& `review'& `Review: [New Review]'\\

\textbf{Children}& `new book descriptions'& 
 \begin{tabular}[t]{@{}l@{}} 
 `book \\
 description'
 \end{tabular}
& \begin{tabular}[t]{@{}l@{}} 
`Title: [New Title] \\ 
Book Description: [New Description]' 
\end{tabular}\\
 \bottomrule
\end{tabular}
\end{adjustbox}

\label{tab-para}
\end{table}



\begin{table}[t!]
\vskip -1.5ex
 \caption{\textbf{Dataset Details.} 
 }
 \label{tab-dataset}
 \vspace{-2.5ex}
 \centering
  \renewcommand{\arraystretch}{1.3} 
   \setlength{\fboxsep}{0pt}
\setlength\tabcolsep{1pt}
\setlength{\extrarowheight}{-4.5pt}
 \begin{adjustbox}{width=0.4\textwidth}
\setlength{\tabcolsep}{1pt}
 \footnotesize
\begin{tabular}{lcccccccc}
\toprule
\textbf{Name}                  & \textbf{\#Nodes     }          & \textbf{\#Edges    }           & \textbf{\#Class  } & \textbf{\#Tail  } &\textbf{Len}&  \textbf{\#Batch } &\textbf{BZ}&\textbf{Time}\\ \midrule
Cora                  & 2,708& 10,858& 7 &                    5&891&  20 &4&3.06s\\
PubMed               & 19,717& 88,670& 3&                     2&1649&  8 &4&4.51s\\
Citeseer& 3186& 8450& 6& 4& 1022&  16&4&3.50s\\
Photo                 & 48,362& 500,939& 12&8&804&  64 &2&4.60s\\
Computer              & 87,229& 721,081& 10&6&499&  24 &2&4.09s\\
Children              & 76,875& 1,554,578& 24&15&1255&  240 &1&7.21s\\
\bottomrule
\end{tabular}
 \end{adjustbox}

\begin{tablenotes}
    \centering 
      \footnotesize
      \item Len: the avg. text length of node attributes; \# Batch: the total number of batches; BZ (Batch Size): the number of prompts per batch; 
      Time: the avg. time per prompt (generated entry) during generation.
 
\end{tablenotes}

 \vskip -1ex
\end{table}
\begin{table}[h]
\vskip -1ex
\centering
      \footnotesize
\caption{Runtime and memory usage across methods.}
\vskip -2.5ex
\setlength{\tabcolsep}{2pt}
\renewcommand{\arraystretch}{0.5}

\begin{tabular}{lccc ccc ccc}
\toprule
& \multicolumn{3}{c}{\textbf{Cora}} 
& \multicolumn{3}{c}{\textbf{Citeseer}} 
& \multicolumn{3}{c}{\textbf{PubMed}} \\
\cmidrule(lr){2-4}\cmidrule(lr){5-7}\cmidrule(lr){8-10}
\textbf{Method} 
& Time & RAM & GPU
& Time & RAM & GPU
& Time & RAM & GPU \\
\midrule
LTE4G     & 120.4 & 1.9 & 630 & 119.4 & 1.6 & 838 & OOM  & OOM & OOM \\
HierTail  & 67.2  & 1.5 & 630 & 49.8  & 1.6 & 838 & OOM  & OOM & OOM \\
SaVe-TAG  & 10.3  & 1.7 & 738 & 11.4  & 1.7 & 738 & 24.9 & 1.8 & 954 \\
\bottomrule
\end{tabular}

\begin{tablenotes}
    \centering 
      \footnotesize
      \item Note that Time is in seconds, RAM in GB, and GPU  in MB. 
 
\end{tablenotes}

\vskip -2ex
\label{tab:runtime}
\end{table}

\section{Efficiency and Scalability Analysis} \label{appen_sce:analysis}

\subsection{Time Costs}
First, we report the statistics regarding the graph sizes 
in 
Table \ref{tab-dataset}.

For scalability analysis and examine efficiency, we measure post-augmentation runtime of our method, and report peak RAM, peak GPU usage, and wall-clock time for each run. We compare SaVe-TAG with SOTA baselines LTE4G and HierTail. Notably, both LTE4G and HierTail encountered OOM issues on PubMed (We manually optimized their code to obtain the results reported in 
Table \ref{tab:runtime}). 

To facilitate text generation, we generate samples in batches. While larger batches reduce overall runtime, memory constraints limit us to small batch sizes. We then compute the average LLM inference time per generated sample. The data regarding batch size and inference time per prompt are also reported in Table~\ref{tab-dataset}.

\subsection{Adaptability for Different LLM Variants}

Figure~\ref{fig-llm} shows the performance and efficiency of \method$_S$ using LLMs with different sizes, including Llama-3-8B-Instruct, Llama-3.2-1B-Instruct, and Llama-3.2-3B-Instruct. As expected, smaller models require less inference time for text generation. Notably, the F1 score remains relatively stable across model sizes, highlighting the adaptability of our method to lightweight, open-source LLMs—a promising direction for reducing generation costs.

\begin{figure}
    \centering
    \includegraphics[width=0.9\columnwidth]{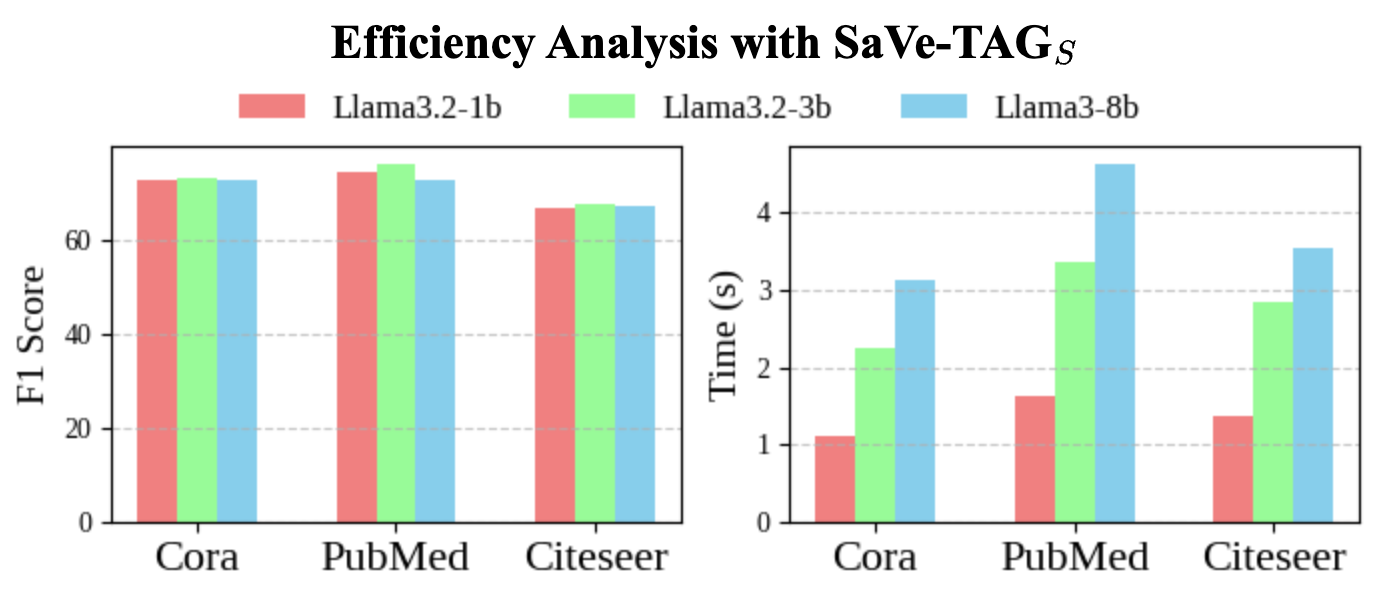}
    \vskip -2ex
    \caption{The F1 score and generation time per entry of LLM variants across different datasets implementing \method$_S$.}
    \label{fig-llm}
    \vskip -1ex
\end{figure}






\section{Detailed Proofs} \label{app_sec:add_theory}

\subsection{Proof of Theorem~\ref{thm:manifold-preservation}
            }
\label{appen:proof_manifold_preservation}
\textbf{Manifold-Preserving Class-Consistent Generation. }


\noindent \textit{Assumptions for this theorem.}
\begin{enumerate}[label=\textbf{A\arabic*}., wide=0pt, leftmargin=2.6em]
\item\label{asm:real-mass} \emph{True on-manifold mass.}\;
      $p_c(A) \ge 1-\delta$ for some $\delta\in[0,1)$.
\item\label{asm:tv-bound} \emph{LLM approximation in TV.}\;
      $D_{\mathrm{TV}}(p_c, q_c) \le \epsilon$ for some $\epsilon\in[0,1)$.
\item\label{asm:measurable} \emph{Measurability.}\;
      $A$ is measurable so that $p_c(A)$ and $q_c(A)$ are well defined.\footnote{%
      This holds automatically when $\phi$ is measurable and $\mathcal M_c$ is Borel.}
\end{enumerate}

\paragraph{Proof.}
By Assumption~\ref{asm:tv-bound} and the definition of total variation,
$
  q_c(A)
  \;\ge\;
  p_c(A) - D_{\mathrm{TV}}(p_c,q_c).
$
Applying Assumption~\ref{asm:real-mass} yields
$
  q_c(A)
  \;\ge\;
  (1-\delta) - \epsilon
  \;=\;
  1 - (\delta+\epsilon).
$
By the definition of $A$ and $q_c$,
$q_c(A)=\Pr_{\hat t\sim \LLM(\cdot\mid c)}\!\big[\phi(\hat t)\in\mathcal M_c\big]$,

which proves
$
  \Pr\big[\phi(\hat t)\in\mathcal M_c\big] \;\ge\; 1-(\delta+\epsilon).
\quad\qedhere
$

\subsection{Proof of Theorem~\ref{thm:margin}}
\label{app_sec:f1}
\textbf{Margin Lower Bound. }


\noindent \textit{Assumptions for this theorem.}
\begin{enumerate}[label=\textbf{A\arabic*}., wide=0pt, leftmargin=2.4em]
\item \emph{Margin monotonicity.}\label{asm:monotone}
      Retraining on $\tilde{\mathcal D}$ does not decrease any
      margin that was already $\ge\gamma_0$.
      Thus, every point in
      $\mathcal D\cup\widehat{\mathcal D}_{\mathrm{in}}$
      keeps margin $\ge\gamma_0$ under the new model
      $f_\theta^{\mathrm{aug}}$.
\item \emph{Boundary–sample slack.}\label{asm:delta}
      For every boundary sample $\hat x$ we have the (possibly loose) bound
      \(\lvert\gamma(\hat x)\rvert\le\delta\) with a fixed
      constant $\delta>0$.
\item \emph{Conservative ordering.}\label{asm:ordering}
      We choose $\delta\!\ge\!\gamma_0$.
      This turns \ref{asm:delta} into a \emph{trivial worst-case cap}
      (it can never be violated) and lets the algebra below
      produce a bona-fide lower bound—that is, the right‐hand side
      never exceeds the true minimum margin.

\end{enumerate}

\paragraph{Two exhaustive cases.}
Let
\(m_{\mathrm{bd}}:=|\widehat{\mathcal D}_{\mathrm{bd}}|\) and
\(m_{\mathrm{in}}:=|\widehat{\mathcal D}_{\mathrm{in}}|\).

\medskip
\noindent
\emph{Case 1: \(\boldsymbol{m_{\mathrm{bd}}=0}\) (BCR $=0$).}
The augmented set contains only
$\mathcal D\cup\widehat{\mathcal D}_{\mathrm{in}}$.
By Assumption~\ref{asm:monotone} every margin is still
$\ge\gamma_0$, hence
\(
\gamma_{\min}(\tilde{\mathcal D})\ge\gamma_0.
\)
The bound in the statement reduces to
\(
\gamma_0-\delta(1-0)=\gamma_0,
\)
so the inequality is tight in this corner case.

\medskip
\noindent
\emph{Case 2: \(\boldsymbol{m_{\mathrm{bd}}>0}\) (BCR $>0$).}
Now at least one boundary sample is present.
By \ref{asm:delta}  
its margin is $\ge-\delta$ and, because $\delta\ge\gamma_0$
(\ref{asm:ordering}), also
$\ge\gamma_0-\delta$.
Combining with Assumption~\ref{asm:monotone} for all other points gives
\[
\gamma_{\min}(\tilde{\mathcal D})
\;\ge\;
\min\bigl\{\gamma_0,\;\gamma_0-\delta\bigr\}
\;=\;
\gamma_0-\delta.
\]
Because BCR\(\in(0,1]\),
\(
\gamma_0-\delta(1-\mathrm{BCR})
\;\le\;
\gamma_0-\delta,
\)
and the claimed inequality again holds.

\paragraph{Unifying the two cases.}
Observe that in \textbf{Case 1}
$(1-\mathrm{BCR})=1$ while in \textbf{Case 2}
$(1-\mathrm{BCR})<1$.
Hence the single formula
\[
\boxed{\;
\gamma_{\min}\!\bigl(\tilde{\mathcal D}\bigr)
\;\;\ge\;\;
\gamma_{0}-\delta\bigl(1-\mathrm{BCR}\bigr)
\;}
\]
is simultaneously valid for both scenarios, completing the proof.

\subsection{Proof of Theorem~\ref{thm:manifold-vrm}
            }
\label{appen:proof_vrm}

\textbf{On-Manifold Vicinal Risk Theorem. }


\noindent \textit{Assumptions for this theorem.}
\begin{enumerate}[label=\textbf{A\arabic*}., wide=0pt, leftmargin=2.6em]
\item\label{asm:manifold} \emph{Manifold-preservation.}\;
      For every $i$ and any neighbor
      $\hat t\sim\mathcal V_{\LLM}(t_i)$,
      \[
      \Pr\!\bigl[\phi(\hat t)\!\notin\!\mathcal M_{c_i}\bigr]
      \;\le\;\delta,
      \]
      where $\delta\in(0,1)$ is the constant proved in
      Theorem~\ref{thm:manifold-preservation}.
\item\label{asm:uniformdelta} \emph{Uniform bound.}\;
      The same $\delta$ works for \emph{all} training indices $i$
      (if they differ, use the worst-case
      $\delta=\max_i\delta_i$).
\item\label{asm:lossbound} \emph{Finite loss.}\;
      $\mathbf{L}(f_\theta(\phi(\hat t)),c_i)$ is integrable so that all
      expectations are well defined.\footnote{%
      A bounded loss (e.g.\ cross-entropy with label smoothing) or a
      sub-Gaussian surrogate satisfies this automatically.}
\end{enumerate}

\paragraph{Vicinal risk decomposition.}
The VRM objective is
\[
R_{vrm}(f_\theta)
=\frac1n\sum_{i=1}^{n}
  \mathbb{E}_{\hat t\sim\mathcal V_{\LLM}(t_i)}
    \bigl[\mathbf{L}(f_\theta(\phi(\hat t)),c_i)\bigr].
\]

Fix $i$ temporarily.
Apply \textbf{\ref{asm:manifold}} and law of total expectation \cite{grimmett2020probability}:
\[
\mathbb{E}_{\hat t}\bigl[\mathbf{L}(\cdot)\bigr]
=(1-\delta)\,
   \mathbb{E}\!\bigl[\mathbf{L}(\cdot)\mid\phi(\hat t)\!\in\!\mathcal M_{c_i}\bigr]
+\;\delta\,
   \mathbb{E}\!\bigl[\mathbf{L}(\cdot)\mid\phi(\hat t)\!\notin\!\mathcal M_{c_i}\bigr].
\]

Define
\[
R_{\text{on}}
:=\frac1n\sum_{i=1}^{n}
  \mathbb{E}\!\bigl[\mathbf{L}(\cdot)\mid\phi(\hat t)\in\mathcal M_{c_i}\bigr],
R_{\text{off}}
:=\frac1n\sum_{i=1}^{n}
  \mathbb{E}\!\bigl[\mathbf{L}(\cdot)\mid\phi(\hat t)\notin\mathcal M_{c_i}\bigr].
\]

Summing the displayed equality over $i$ 
and using
\textbf{\ref{asm:uniformdelta}} yields
\[
\boxed{\;
R_{vrm}(f_\theta)
=(1-\delta)\,R_{\text{on}}+\delta\,R_{\text{off}}
\;}
\]
with $0\le\delta\ll1$ coming from
Theorem~\ref{thm:manifold-preservation}.
Because $(1-\delta)$ is the probability of drawing an
\emph{on-manifold} neighbor, the VRM risk is evaluated on-manifold
with probability at least $1-\delta$, completing the proof.\qedhere

\subsection{Proof of Theorem~\ref{thm:margin-vrm}	}
\label{appen:proof_margin_vrm}

\textbf{Boundary-Coverage $\Rightarrow$ Vicinal-Risk Reduction. }


\noindent \textit{Assumptions used in the proof.}
\begin{enumerate}[label=\textbf{A\arabic*}., wide=0pt, leftmargin=2.6em]
\item\label{asm:lipschitz}
      \emph{Lipschitz loss.} 
      $\mathcal J(z_1,y)-\mathcal J(z_2,y)\le L\|z_1-z_2\|_2$ ~\cite{shalev2014understanding}.
\item\label{asm:margin_inc}
      \emph{Non-decreasing margins.} 
      VRM retraining never \emph{reduces} the margin of a synthetic
      point: $\gamma_{\,\mathrm{aug}}(\hat x)\ge\gamma_{\,\mathrm{orig}}(\hat x)$.
\item\label{asm:bd_slack}
      \emph{Boundary slack.} 
      $|\gamma(\hat x)|\le\delta$ for all
      $\hat x\in\widehat{\mathcal D}_{\mathrm{bd}}$.
\item\label{asm:int_margin}
      \emph{Interior margin.} 
      $\gamma(\hat x)\ge\gamma_0$ for all
      $\hat x\in\widehat{\mathcal D}_{\mathrm{in}}$.
\end{enumerate}

(Last three assumptions are justified in Appendix \ref{app_sec:f1}.)

\paragraph{Pointwise loss drop.}
For any synthetic $(\hat x,\hat y)$, combine
\textbf{\ref{asm:lipschitz}} + \textbf{\ref{asm:margin_inc}}
to obtain
\[
\mathcal J\bigl(f_\theta^{\mathrm{aug}}(\hat x),\hat y\bigr)
-\mathcal J\bigl(f_\theta^{\mathrm{orig}}(\hat x),\hat y\bigr)
\;\le\;
-L\,\gamma(\hat x).
\tag{S1} \label{eq:S1}
\]

\paragraph{Separate boundary vs. interior sums.}
Sum \eqref{eq:S1} over the two synthetic subsets and divide by $m$:
\begin{align*}
R_{vrm}\!\bigl(f_\theta^{\mathrm{aug}}\bigr)
- R_{vrm}\!\bigl(f_\theta^{\mathrm{orig}}\bigr)
&\le
-\frac{L}{m}
 \sum_{\hat x\in\widehat{\mathcal D}_{\mathrm{bd}}}\!\!\gamma(\hat x)
-\frac{L}{m}
 \sum_{\hat x\in\widehat{\mathcal D}_{\mathrm{in}}}\!\!\gamma(\hat x) \\[4pt]
&\overset{\textbf{\ref{asm:bd_slack}},\textbf{\ref{asm:int_margin}}}{\le}
-L\,\delta\,
   \frac{|\widehat{\mathcal D}_{\mathrm{bd}}|}{m}
\;-\;
L\,\gamma_0\,
   \frac{|\widehat{\mathcal D}_{\mathrm{in}}|}{m}.
\end{align*}

Recognize the fractions as \(\mathrm{BCR}\) and \(1-\mathrm{BCR}\):
\[
R_{vrm}\!\bigl(f_\theta^{\mathrm{aug}}\bigr)
- R_{vrm}\!\bigl(f_\theta^{\mathrm{orig}}\bigr)
\;\le\;
-L\,\gamma_0\,\mathrm{BCR}
\;+\;
L\,\delta\,(1-\mathrm{BCR}).
\tag{S2} \label{eq:S2}
\]

\paragraph{Express the bound in compact form.}
Choose 
\(
\eta=\delta/\gamma_0
\;(0<\eta<1)
\)
and rewrite \eqref{eq:S2} as
\[
R_{vrm}\!\bigl(f_\theta^{\mathrm{aug}}\bigr)
- R_{vrm}\!\bigl(f_\theta^{\mathrm{orig}}\bigr)
\;\le\;
-L\,\gamma_0\,
   \bigl[\mathrm{BCR}-\eta\bigr]
\;+\;
\mathcal O(\delta),
\]
where the $\mathcal O(\delta)$ term hides constants independent of
$\mathrm{BCR}$ when $\delta$ is small.
Hence whenever
\(\mathrm{BCR}>\eta=\delta/\gamma_0\)
(the typical case in our experiments),
the vicinal risk strictly \emph{decreases} after VRM training.
\qedhere

\subsection{Proof of Theorem~\ref{thm:pull}}
\label{appen:proof_pull}

\textbf{Pulling Well-Aligned Nodes. }


\paragraph{Layer-wise update.}
For node $\hat v$ the generic message-passing rule (Equation \ref{eq:agg}) reads
\[
h^{(\ell+1)}_{\hat v}
=\alpha\,\W h^{(\ell)}_{\hat v}
+(1-\alpha)
  \sum_{u\in\mathcal N'(\hat v)}\beta_{\hat v u}\,\W h^{(\ell)}_u.
\tag{U}
\]

\noindent \textit{Assumptions used in the proof.}
\begin{enumerate}[label=\textbf{A\arabic*}., wide=0pt,leftmargin=2.6em]
\item\label{asm:contract}
      \emph{Non-expansive layer:}\;
      spectral norm $\|\W\|_2\le L<1$.
\item\label{asm:neigh}
      \emph{Well-aligned neighbors:}\;
      $\operatorname{dist}\!\bigl(h^{(\ell)}_u,\mathcal M_c\bigr)
      \le\varepsilon$ for every $u\in\mathcal N'(\hat v)$.
\item\label{asm:weights}
      \emph{Convexity of weights:}\;
      $\alpha\in(0,1)$ and $\{\beta_{\hat v u}\}$ form a convex
      combination.
\end{enumerate}

(All three conditions are satisfied by common GNNs 
and our confidence-based edge assignment.)

\paragraph{Contractive bounds after applying $\W$.}
By \textbf{\ref{asm:contract}}, linear mapping $\W$ shrinks distances
by at most factor $L$:
\[
\operatorname{dist}\bigl(\W h^{(\ell)}_{\hat v},\W\mathcal M_c\bigr)
      \le L\,d_\ell,
\qquad
\operatorname{dist}\bigl(\W h^{(\ell)}_{u},\W\mathcal M_c\bigr)
      \le L\,\varepsilon
      \; (\text{by \textbf{\ref{asm:neigh}}}).
\tag{B}
\]

\paragraph{Distance after aggregation.}
Because the right-hand side of (U) is a convex combination
(\textbf{\ref{asm:weights}}),
the triangle inequality and (B) give
\[
d_{\ell+1}
:=\operatorname{dist}\bigl(h^{(\ell+1)}_{\hat v},\mathcal M_c\bigr)
\;\le\;
\alpha L d_\ell \;+\;(1-\alpha)L\varepsilon.
\tag{C}
\]

\paragraph{Geometric contraction (when $d_\ell\!\ge\!\varepsilon$).}
If the current distance satisfies $d_\ell\ge\varepsilon$, then
$(1-\alpha)L\varepsilon \le(1-\alpha)L d_\ell$,
so (C) becomes
\[
d_{\ell+1}
\le
\underbrace{\bigl[\alpha L + (1-\alpha)L\bigr]}_{=\;\alpha L}\,d_\ell
=:\gamma\,d_\ell,
\]
with
$
\gamma:=\alpha L$ and $0<L<1,\;0<\alpha<1
\;\Longrightarrow\; \gamma\in(0,1).
$

\paragraph{Exponential convergence.}
Iterate the inequality whenever $d_k\ge\varepsilon$:
\[
d_\ell\;\le\;\gamma^{\ell-\ell_0}\,d_{\ell_0}
\quad(\ell\!\ge\!\ell_0),
\]
demonstrating exponential decay toward $\mathcal M_c$.
Once the distance falls below $\varepsilon$, neighbors and self-term
are already within the same $\varepsilon$-tube, so all subsequent
distances stay bounded by $L\varepsilon$.
\qedhere


\end{document}